\newcommand{\ds}[1]{\todo[inline]{DS: #1}}
\newcommand{\crcchange}[1]{#1}
\newcommand{\R}{\mathbb{R}}
\newcommand{\numlabels}{q}
\newcommand{\numcovariates}{p}
\newcommand{\parens}[1]{\ensuremath{\left(#1\right)}\xspace}
\newcommand{\brackets}[1]{\ensuremath{\left[#1\right]}\xspace}
\newcommand{\Verts}[1]{\ensuremath{\lVert#1\rVert}\xspace}
\newcommand{\equiproj}[1]{{#1}^{\text{equi}}}
\newcommand{\equigap}[1]{\mathcal{E}^\text{equi}(#1)}
\newcommand{\methodname}{EquiTabPFN}
\newcommand{\tabpfn}{TabPFNv2}
\newcommand{\ourtabpfn}{TabPFNv2$^*$}
\newlist{assumplist}{enumerate}{1}
\setlist[assumplist]{label=(\textbf{\Alph*})}
\Crefname{assumplisti}{Assumption}{Assumptions}
\newlist{assumplist2}{enumerate}{2}
\setlist[assumplist2]{label=(\textbf{\alph*})}
\Crefname{assumplist2i}{Assumption}{Assumptions}
\newlist{assumplistobs}{enumerate}{3}
\setlist[assumplistobs]{label=(\textbf{$\mathcal{F}$-\alph*})} 
\Crefname{assumplistobs}{Assumption}{Assumptions}
\Crefname{assump}{Assumption}{Assumptions}
\theoremstyle{plain}
\newtheorem{theorem}{Theorem}[section]
\newtheorem{proposition}[theorem]{Proposition}
\newtheorem{definition}[theorem]{Definition}
\theoremstyle{remark}
\title{\methodname{}: A Target-Permutation Equivariant Prior Fitted Network}
\author{
\textbf{Michael Arbel}$^{*1}$ \quad
\textbf{David Salinas}$^{*2,3}$ \quad
\textbf{Frank Hutter}$^{2,3,4}$
\\
$^1$INRIA \quad
$^2$University of Freiburg \quad
$^3$ELLIS Institute Tübingen 
$^4$PriorLabs \quad
\\
$^*$Equal contribution
\\
}
\begin{document}

\maketitle

\begin{abstract}
  Recent foundational models for tabular data, such as TabPFN, excel at adapting to new tasks via in-context learning, but remain constrained to a fixed, pre-defined number of target dimensions—often necessitating costly ensembling strategies. We trace this constraint to a deeper architectural shortcoming: these models lack target equivariance, so that permuting target dimension orderings alters their predictions. This deficiency gives rise to an irreducible “equivariance gap,” an error term that introduces instability in predictions. We eliminate this gap by designing a fully target-equivariant architecture—ensuring permutation invariance via equivariant encoders, decoders, and a bi-attention mechanism. Empirical evaluation on standard classification benchmarks shows that, on datasets with more classes than those seen during pre-training, our model matches or surpasses existing methods while incurring lower computational overhead.
\end{abstract}

\section{Introduction}

Tabular data, a prevalent format in many real-world applications, has historically presented unique challenges to deep learning due to its lack of inherent structure compared to image or text data \citep{grinsztajn2022tree}. 
Foundation models, such as TabPFN \citep{hollmanntabpfn}, have recently been introduced to tackle classification tasks in tabular domains. These models leverage  \emph{in-context learning} capabilities of transformers \citep{brown2020language}, to perform both training and prediction in a single model evaluation, without requiring any parameter updates, achieving remarkable performance. 

At the core of these foundational models is a pre-training procedure  in which a transformer model is trained to predict test targets from test covariates, conditioned on training covariate/target pairs all of which are randomly sampled from some well-designed generative model. While it might seem surprising, at first, how a model pre-trained on synthetic data could perform well on real unseen data, recent work, such as \citet{nagler2023statistical}, provides a theoretical study that shades some light on this phenomenon. These models, leverage the transformer architecture to perform \emph{attention over rows}—attending to all samples simultaneously to enable cross-sample comparison. 
Applying attention over rows is crucial, as it enables the model to capture higher-order similarities between samples while preserving an inherent symmetry of tabular data—namely, that row order is irrelevant under the common i.i.d. assumption in supervised learning. 
Conveniently, such a mechanism also allows, in theory, handling an arbitrary number of training samples—a property that enables the model to generalize across tasks with varying dataset sizes without architectural modifications.

However, models such as TabPFN \citep{hollmanntabpfn} are inherently confined to covariate–target pairs of fixed, predefined dimension, thereby limiting their applicability to datasets that match these specifications. This limitation can be alleviated via additional data pre‑processing—e.g., projecting high‑dimensional covariates into a lower‑dimensional subspace—or by post‑processing model predictions,—e.g. employing hierarchical strategies for classification tasks with many classes \citep{silla2011survey,dietterich1994solving}. However, the increased computational burden often offsets some of the benefits offered by \emph{in-context learning}.

Recent work in \cite{mueller2024gamformer,hollmann2025nature} have partially addressed these challenges,  allowing the model to handle arbitrary number of covariates, albeit, requiring the target dimension to have a fixed pre-defined dimension. 
A key insight there, is to exploit another inherent symmetry of tabular data: the arrangement of columns/covariates's dimensions should not influence model predictions. This is achieved through the bi-attention mechanism which alternates between attention over samples/rows and \emph{columns}, i.e. covariates dimensions, thereby making the model equivariant to feature permutations  just as it is equivariant to sample permutations.  
Nevertheless, these models remain limited to tasks where the target size matches the predefined dimensionality. 
While covariates are provided for both training and test samples, only training targets are available to the model. This asymmetry between training and test samples complicates direct extensions of the above approaches to handle the targets.

In this work, we propose \methodname{}, a novel architecture that enforces target equivariance, thus ensuring more robust and consistent predictions while  handling targets of arbitrary dimensions. 
Unlike feature equivariance which can be directly obtained using a bi-attention mechanism, we achieve target equivariance by carefully combining three different mechanisms: 
(1) Bi-attention across covariates/target components and datapoints,
(2) Prediction tokens to replace unavailable test targets, and 
(3) A non-parametric decoder preserving equivariance in predictions. 
We then establish the importance of target equivariance through theoretical and empirical analyses. 
In our theoretical study, we show that optimal functions for the pre-training procedure must necessarily be target-equivariant. Finally, we demonstrate, on real-world datasets,  that target-equivariant models are beneficial both in terms of classification performance and inference runtime.

\section{Related Work}

{\bf Prior-Fitted Networks.} Since the introduction of TabPFN in \citet{hollmanntabpfn}, which demonstrates how such a model can be successfully trained on synthetic data,  
several works have leveraged this architecture for applications such as Bayesian Optimization \cite{muller23a}, forecasting \cite{dooley2024}, learning curve extrapolation \cite{adriaensen23}, and fairness analysis \cite{robertson2024}. Aside from \citet{hollmann2025nature,mueller2024gamformer} that proposed a covariate equivariant version of the TabPFN models to better capture natural symmetries of tabular data, 
other works focused on improving scalability and speed of the model. This includes \citet{muller2023mothernet} who proposed to pre-train the model for producing the weights on an MLP by in-context learning, so that the resulting MLP performs well on a test portion of a particular dataset while achieving much lower latency.
Recently, \citet{qu2025tabicl} improved the scalability w.r.t. to the sample size by introducing a two-stage architecture that first  builds fixed-dimensional embeddings of rows, followed by a transformer for efficient in-context learning. All these approaches, still require a pre-defined fixed target dimension. 
The focus is orthogonal to ours, as we specifically analyze and enhance the target's  representation of TabPFN family of architectures.

\crcchange{
\paragraph{Modifying Prior-Fitted Networks outputs.}
To the best of our knowledge, no previous approach has proposed a \emph{target-equivariant} architecture for foundational tabular models, with the exception of the concurrent work from \citet{koshil2025} which unlike us avoid to process labels with non-linear outputs to focus on interpretability versus accuracy.
Several works have also proposed modifications to the output architecture of TabPFN. \citet{mueller2024gamformer} introduced a modification of the output, replacing the linear projection from token embeddings to the target with Generalized Additive Models. 
This approach improves the interpretability of the model by constructing shape functions for each feature, which can be analyzed independently. \citet{margeloiu2024tabmda} and \citet{ye2024} explored combining non-parametric models on top of TabPFN. Both approaches require training a model at inference time unlike our method.
}

{\bf Beyond pre-defined target dimensionality.} 
To address TabPFN's limitation to a predefined number of classes, \citet{hollmann2025nature} propose to split the classification problem into smaller ones on which the model can be employed, then to aggregate predictions using a strategy, such as the one based on an error-correcting output codes (ECOC) \citep{dietterich1994solving}. In this work, we show that such a strategy results in an increased computational cost compared to using our architecture that is natively target equivariant. 
\citet{qu2025tabicl} proposed  to use a hierarchical classification strategy \citep{silla2011survey} which still incurs an increased computational cost. Recently, \citet{wu2025zeroshotmetalearningtabularprediction} propose a mechanism to handle arbitrary number of classes without target equivariance and require a different paradigm involving an adversarial training procedure.

\crcchange{
{\bf Equivariance beyond tabular methods.}
Designing equivariant architectures \cite{cohen2016} has long been recognized as beneficial, with the most prominent example being convolutional neural networks, which are equivariant to image translation \cite{lecun89}. More recently, research has focused on designing architectures with other symmetries, such as those present in spherical \cite{esteves2018}, set \cite{zaheer2018}, or graph data \cite{satorras2022}. Recent work has also explored incorporating symmetries in Large Language Models. For instance, \citet{egressy2025} proposed a modification to self-attention that produces outputs equivariant to permutations of multiple-choice options, ensuring that the order of choices does not affect the result—a property known to be important in LLM applications such as LLM judges \cite{zheng2023}.
}

\section{Background on Prior-Fitted Networks}\label{sec:background}
\citet{hollmanntabpfn} introduced a pre-trained model, TabPFN, that leverages the  transformer architecture \citep{Vaswani:2017} to perform \emph{in-context learning} on unseen tabular datasets for classification tasks without the need for any further training. 
Specifically, given training and test datasets of input-output pairs $(X,Y):=(x_n,y_n)_{n=1}^N$ and test samples $(X^{\star},Y^{\star}):= (x_m^{\star},y^{\star})_{m=1}^M$, TabPFN returns a prediction $\hat{Y}= f_{X,Y}\parens{X^{\star}}$, where $f_{X,Y}\parens{X^{\star}}$ is the output of the network when provided with the training collection $(X,Y)$ and test queries $X^{\star}$. Here the input vectors $x_n$ and $x_m^{\star}$ belong to a euclidean space $\R^{p}$, while classes are represented by one-hot vectors $y_n \in \R^{q}$. 

We now briefly describe the three modules of TabPFN model:  an encoder, a backbone and a decoder, as it will help identify the main architectural   constrains that impose a limit on the number of classes. 

{\bf Linear Encoder. }  The encoder module constructs training and test tokens $(e_n)_{n=1}^N$ and $(e^{\star}_m)_{m=1}^M$ that are provided to the transformer backbone assuming the inputs $x$ and $y$ are vectors of fixed dimensions $\numcovariates$ and $q_{max}$. 
Each training token $e_n$ is obtained by  linearly embedding both covariate $x_n$ and target $y_n$ into a feature space of fixed dimension $d$ and then summing both embeddings, i.e.  $e_n = U x_n +Vy_n$, where $U$ and $V$ are trainable matrices of sizes $d\times \numcovariates$ and $d\times \numlabels$. On the other hand, the test token consists only in embedding the test covariate $x^{\star}_m$, i.e. $e^{\star}_m = U x_m^{\star}$ since the test target $y^{\star}_m$ is not provided to the network. While targets with smaller dimensions can be handled by a simple zero-padding procedure (see \citet{hollmanntabpfn}), the encoder cannot easily accommodate target data with dimensions greater than  $\numlabels$.

{\bf Transformer backbone.} The backbone consists of a succession of residual multi-head self-attention layers between all tokens followed by a residual feed-forward network applied to each token. 
In order to avoid information leakage from test to train data, an attention mask ensures that all tokens can only attend to the training tokens. This also ensures that test tokens are processed independently from each other.  
The residual connections in the backbone preserve the initial feature dimension $d$, so that each token is still associated to a particular sample while gradually incorporating information from all training tokens.      

{\bf MLP decoder.} The decoder consists of a one-hidden layer MLP that takes each test output token $e^{\star}_{m}$ produced by the transformer backbone and produces a prediction vector $\hat{y}_m$ of dimension $\numlabels$.  
As the decoder requires a pre-defined target dimension $\numlabels$, it cannot be used post-hoc on new data with higher target dimensions. 

\todo[inline]{We probably want to introduce the proj matrix here to be consistent with the input description.}

A notable property of the TabPFN architecture is its invariance of the test predictions to the order by which training and test points are provided%
. 
This property is desirable since the order of training points is arbitrary and should not influence predictions on test samples. 
However, the network lacks \emph{equivariance} w.r.t. the targets' dimensions, meaning that predictions are highly dependent on the order by which the training target dimensions are provided, an undesirable property as we show in our theoretical analysis in \cref{sec:target_permutation_equivariance}.

\section{Target equivariant prior-fitted network}\label{sec:equitabfn} 

\begin{figure*}[t]
\center
\includegraphics[width=0.99\textwidth]{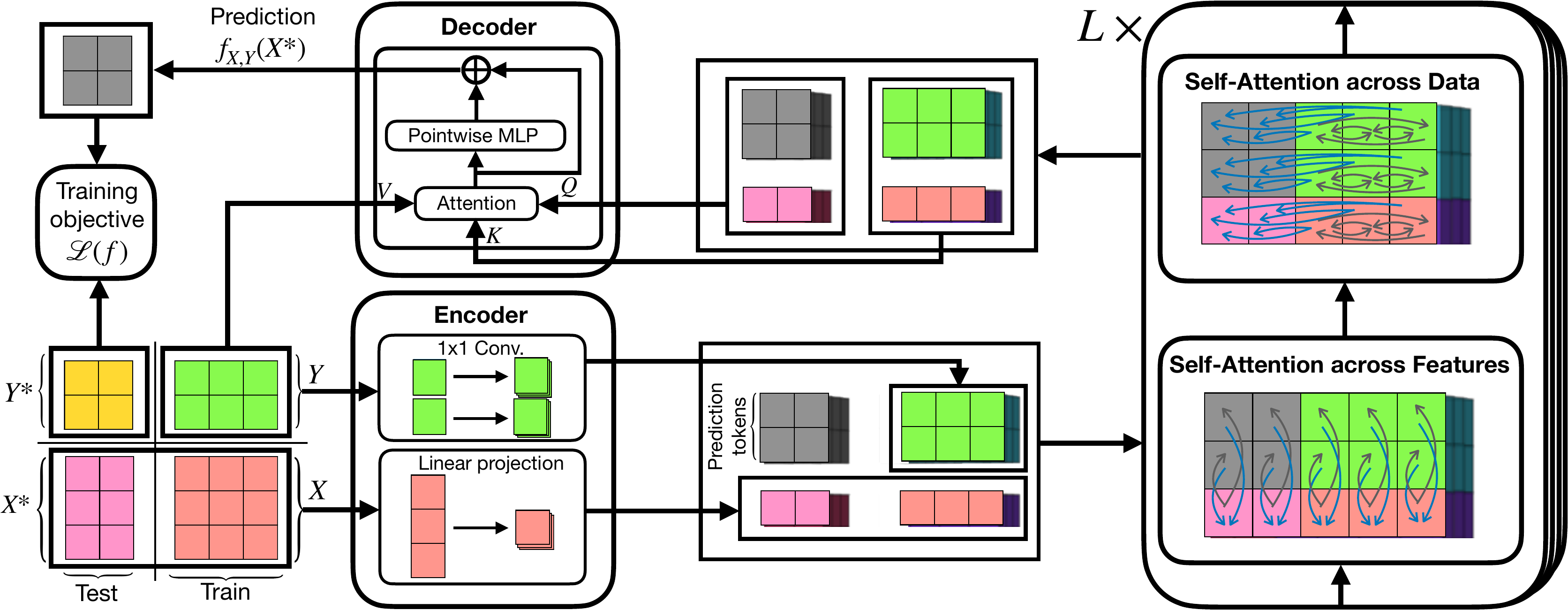}
\caption{Overview of \methodname{}'s architecture. 
Data is tokenized via an encoder, processed using self-attention, and decoded to obtain predictions. The encoder maps each covariate to a single token and embeds target components into tokens via a $1\times 1$ convolution. 
Missing test tokens are replaced by prediction tokens. 
Self-attention alternates between (1) feature-wise attention, with target tokens attending only to covariate tokens (gray arrows) while covariate tokens attend to all tokens (blue arrows); (2) Data-wise attention, where test tokens attend only to training tokens (blue arrows), and training tokens attend to themselves (gray arrows).}
\label{fig:architecture}
\end{figure*}

We introduce \methodname{}, a new model architecture for in-context learning on tabular data, that is permutation equivariant w.r.t. the target's components. 
Our architecture integrates self-attention mechanisms across  data points and data components to leverage relationships between datapoints while preserving equivariance by processing individual attributes (such as the targets components). 
Unlike TabPFN which requires fixed target dimensions for all datasets, \methodname{} allows the dimensions of the target to change depending on the dataset as a consequence of its equivariant architecture. 
\methodname{} consists of three major modules: a target equivariant encoder, an attention module both across data components and across data points, and a non-parametric equivariant decoder, each designed to facilitate the end-to-end learning of components interactions and datapoint relationships, see \cref{fig:architecture}. 
Below, we elaborate on each module and their interactions.

\subsection{Target equivariant encoder}
The encoder constructs training and test tokens by applying a linear projection to both covariates and targets so that target equivariance is preserved. 
Specifically, following \cite{hollmanntabpfn}, each training and test covariate vector $x_{n}$ and $x^{\star}_m$ is encoded into a single token of dimension $d$ by applying a linear projection matrix $U$ of size $d\times p$.  
However, instead of adding a linear projection of the training targets to each corresponding training token, as done in the case of TabPFN (see \cref{sec:background}), we compute a token for each component $(y_n)_j$ of a training target by multiplying them with an embedding vector $V$ of dimension $d$ for all $1\leq j\leq q$. This operation amounts to applying a $1\times 1$ convolution along the components of each target which preserves target equivariance. 
Since, the validation target $Y^{\star}$ is not provided to the model as input, it is replaced by a trainable \emph{prediction token} $W_\text{pred}$ of dimension $d$ that is repeated $M\times q$ times to form an initial guess $\tilde{Y}^0$ of the target.  
When considering a batch $\mathcal{B}$ of $B$ datasets $((X,Y),(X^{\star},Y^{\star}))\in \mathcal{B}$ of same dimensions,  
all these embeddings along with prediction tokens are collected to form a single tensor $E$ of shape $(B, N+M, q+1, d)$. Here, for each batch element $((X,Y),(X^{\star},Y^{\star}))$ of index $b$, the blocks $E_{b,:N,1,:}$,  $E_{b,N:M,1,:}$ correspond to embeddings of $X$ and $X^{\star}$, $E_{b,:N,1:q,:}$ represents the embedding of $Y$ while $E_{b,N:M,1:q,:}$ denotes the initial guess $\tilde{Y}^0$ obtained using the prediction token. This tensor is then processed by the attention modules as described next. 
\ds{Write matrix dimensions}
\subsection{Self-Attention Mechanisms}
The core of the architecture involves two alternating self-attention modules:  self-attention across components {\bf $\text{SelfAtt}_c$} and self-attention across datapoints {\bf $\text{SelfAtt}_b$} used for transforming the tokens. These alternating self-attention layers allow the model to learn both intra-samples components interactions and inter-samples relationships. Following standard design choices for transformers, we apply residual connections and layer normalization to ensure stability and robust gradient flow, i.e.: 
\begin{align*}
  E&\leftarrow \text{LN}\parens{E + \textbf{SelfAtt}_{c/b}(E)}, &
  E&\leftarrow \text{LN}\parens{E+ \text{MLP}(E)},
\end{align*}
where {\bf LN} denotes the layer normalization layer \citep{Ba:2016}, {\bf $\text{SelfAtt}_{c/b}$} denotes one of the considered self-attention mechanisms and {\bf MLP} is a one hidden-layer network acting on each embedding independently. Below, we describe both self-attention mechanisms in more detail.   

{\bf Self-attention across components} allows interactions among components within each datapoint. 
It is applied independently per samples to preserve equivariance w.r.t. to the samples. \crcchange{In practice, this is achieved by reshaping the activation into a tensor of the shape $(B\times(N+M), q+1,d)$ before applying attention between $q+1$ covariate tokens of dimension $d$.} We further employ a masking strategy that we found useful empirically: forcing target tokens to attend only to the covariate token, while allowing the covariate token to attend to all tokens. 

{\bf Self-Attention across datapoints} captures relationships between datapoint embeddings, allowing the model to aggregate information globally. 
It is applied between samples and independently per each input dimensions $p$ and $q$ to preserve equivariance. \crcchange{In practice, this is achieved by reshaping the activation into a tensor of the shape $(B\times(q+1), N+M,d)$ before applying attention between $N+M$ training and validation tokens of dimension $d$.} Similarly to \cite{hollmanntabpfn}, training and validation tokens only attend to training tokens. %

\crcchange{ \paragraph{Remark.}  EquiTabPFN and TabPFN both have linear computational complexity in the number of classes, but EquiTabPFN has a larger factor due to self-attention across components. 
In TabPFN, linear scaling arises from projecting classes into a fixed-dimensional space and then mapping back  fixed dimensional features to classes, while the backbone remains independent of the number of classes. 
In contrast, EquiTabPFN’s backbone scales linearly with class number, allowing it to handle arbitrary class counts.
}

\subsection{Non-parametric equivariant decoder}
The decoder aggregates the processed embeddings to produce prediction $\hat{Y}$. This is achieved in two steps: an attention module first computes an intermediate prediction $\tilde{Y} = (\tilde{y}_m)_{m=1}^M$ in the form of a weighted average of training targets $Y$, then a residual correction is added to produce the final prediction. 
More precisely, the attention module uses the embeddings of the training and validation samples as keys and queries,  while the attention values are simply the training targets $Y$, i.e.
$
  \tilde{y}_m = \sum_{n=1}^N y_{n} \text{SoftMax}\parens{\sum_{i,u}E_{b,n,i,u}E_{b,m,i,u}/ \sqrt{(1+q)d}},
$
\crcchange{where $b$ is the batch-index corresponding to the training target $Y$.} 
 The residual correction, in the form of a point-wise MLP, operates independently on each dimension $j$ of the attention output $\tilde{y}_m$ so that equivariance is preserved while enabling nonlinear interactions between training values. 

Without the residual correction and removing the dependence of the keys and queries embeddings on the training targets $Y$ (for instance by setting the weights of the target encoder and pointwise MLP to $0$), the decoder becomes a  \textit{linear non-parametric regression estimator} \citep[Definition 1.7]{Tsybakov:2009}, which is a generalization of Nadaraya-Watson's estimator
 \citep{nadaraya1964estimating,watson1964smooth}. 
However, linear estimators are known to be suboptimal compared to non-linear ones \citep{donoho1998minimax}. This motivates introducing a nonlinear dependence of the estimator to $Y$, in our setting, to increase the expressiveness of the decoder allowing it to adapt to the prediction task at hand. Experimentally, we also found a clear improvement when adding such a residual correction and making the embeddings dependent on the targets.

\subsection{Pre-training Procedure}
\methodname{} can be pre-trained using the same procedure as in  \citet{hollmanntabpfn}, on artificial datasets sampled from a sophisticated generative model meant to capture the real-world distribution of datasets. 
More precisely, each artificial dataset consists of training/and test splits $(X,Y):=(x_n,y_n)_{n=1}^{N}$ and $(X^{\star},Y^{\star}):=(x_m^{\star}, y_m^{\star})_{m=1}^M$ sampled according to a conditional distribution $p(x,y|\psi)$ characterized by a \emph{latent} parameter $\psi$. The parameter $\psi$ characterizes the dataset and is itself sampled according to a predefined prior $p(\psi)$. The pre-training procedure requires repeatedly generating artificial datasets and training the model to predict test target values $Y^{\star}$ given corresponding test covariates $X^{\star}$ as well as training covariates/target pairs $(X,Y)$ by minimizing an objective of the form: 
\begin{align}\label{eq:pre-training_loss}
    \mathcal{L}(f) := \mathbb{E}\brackets{ \ell\parens{f_{X,Y}\parens{X^{\star}},Y^{\star}} }. 
\end{align}
Here, $(y,y')\mapsto\ell(y,y')\in \R$ is a point-wise loss, typically cross-entropy, and the  expectation is over the collections datasets sampled according to the dataset prior. %
Note that each test query $x_m^{\star}$ is processed independently by the network $f$ so that $f_{X,Y}\parens{X^{\star}} = (f_{X,Y}\parens{x_m^{\star}})_{m=1}^M$. Next, we show, under natural conditions, that target equivariant functions constitute the right space of functions when searching for solutions to objectives of the form in \cref{eq:pre-training_loss}.

\section{Target permutation equivariance and prior-fitted networks}
\label{sec:target_permutation_equivariance}

We now formally analyze the impact of non-target equivariance on the training objective. We begin by precisely defining target equivariance, then show that an optimal solution must be equivariant or otherwise incurs an error quantified by the \emph{target equivariance gap}. Finally, through empirical analysis of TabPFN training, we illustrate how the equivariance gap decreases slowly, highlighting the fundamental challenge non-equivariant architectures face in learning this key data symmetry.

\subsection{Optimality of target equivariant networks}
\label{sub:optimality_of_target_equivariant_networks}
When presenting a new unseen dataset of covariate/target pairs $(x_n,y_n)_{n=1}^N$ to a pre-trained model, the order of the component's target is arbitrary. 
In other words, given target vectors of the form $y_n = \parens{(y_n)_1,\dots, (y_n)_{\numlabels}}$, these could as well be presented in a different order by applying a permutation $\sigma$ to the components of $y_n$ to obtain a permuted target $\sigma(y_n) = \parens{(y_n)_{\sigma(1)},\dots, (y_n)_{\sigma(\numlabels)}}$.
The transformed dataset $(x_n,\sigma(y_n))_{n=1}^N$ is still essentially the same as the original dataset up to the permutation as we only changed the order of the target components. For instance, when the target represents a one hot encoding vector of $2$ classes: "red" or "blue", it should not matter whether we encode "red" as the first or the second class in the one-hot vector. 
Consequently, a pre-trained model should be able to provide consistent predictions regardless of the component's order. 
More formally, the model should satisfy the following equivariance property:
\begin{definition}[Target permutation  equivariance]\label{def:equi}
A function $f$ is permutation \emph{equivariant} in the targets' components iff for any training data $(X,Y)$ and test covariates $X^{\star}$:  
\begin{align}\label{eq:equivariance}
\forall \sigma \in \mathfrak{S}_{\numlabels}, \quad \sigma^{-1}\parens{f_{X, \sigma(Y)}\parens{X^*}} = f_{X, Y}\parens{X^*},
\end{align}
where $\mathfrak{S}_{\numlabels}$ denotes the set of all possible permutations of the components of a vector of $\numlabels$ elements. 
\end{definition}
It is clear from \cref{def:equi} that \methodname{} is target equivariant. Even when a model is not target equivariant by construction, it is natural to expect it to learn to be target equivariant, when trained via the objective in \cref{eq:pre-training_loss} over a large class of randomly sampled datasets. 
To formalize this, we define the \emph{target equivariance gap}, which quantifies the deviation of a function $f$ from its symmetrized counterpart $\equiproj{f}$. 
\begin{definition}[Target-equivariance gap] 
	The target-equivariance gap $\equigap{f}$ of a function $f$ w.r.t. to $\mathcal{L}$ is the difference between the objective values at $f$ and its \emph{symmetrized} version $\equiproj{f}$: 
\begin{align}\label{eq:equivariance_gap}
  \equigap{f} := \mathcal{L}(f) - \mathcal{L}(\equiproj{f}),
\end{align}
where $\equiproj{f}$ is obtained by applying the following averaging operation w.r.t. to the uniform distribution $\mathbb{E}_{\sigma}$ over all permutations $\sigma$ of the target: 
\begin{align}
    \equiproj{f}_{X,Y}\parens{X^{\star}} = 
    \mathbb{E}_{\sigma}\brackets{\sigma^{-1}\parens{f_{X,\sigma(Y)}\parens{X^{\star}}}}.
    \label{eq:equivariant-sum}
\end{align}
\end{definition}
By construction, $\equiproj{f}$ is permutation equivariant w.r.t. the target components.  Moreover it can be easily shown that a function $f$ is itself equivariant iff $\equiproj{f}=f$, so that the gap vanishes. In general, the equivariance gap can take negative values. 
However, we establish later that this equivariance gap must be non-negative under the following assumptions on the pointwise loss $\ell$ and the marginal distribution $p$ of the data:
\begin{assumplist}
\item \label{assump:convex_loss} {\bf Invariance and convexity of the pointwise loss.} The pointwise loss $\ell$ is strictly convex in its first argument and is invariant to permutations of the components of its arguments, i.e. for any permutation $\sigma$, it holds that: $
  \ell(\sigma(y),\sigma(y')) = \ell(y,y').
$
\item\label{assump:invariant_distribution} {\bf Invariance of the data distribution.} The marginal distribution of the data is invariant to permutations applied to $Y$ and $Y^{\star}$, i.e.:  
$
  p(X,\sigma(Y),X^{\star},\sigma(Y^{\star})) = p(X,Y,X^{\star},Y^{\star})$.
\end{assumplist}
\cref{assump:convex_loss} is satisfied for most commonly used losses such as the cross-entropy loss or the quadratic loss.  \cref{assump:invariant_distribution} holds as soon as data can be presented without a preferred ordering, as most i.i.d. tabular datasets.  The next proposition, proved in \cref{sec:proofs}, decomposes the objective into a non-negative equivariance gap and an optimality error.
\begin{proposition}\label{prop:error_decomposition}
    Under \cref{assump:convex_loss,assump:invariant_distribution}, the equivariance gap $\equigap{f}$  is always non-negative and only equal to $0$ when $f$ is equivariant to permutations, so that for any $f$:
    \begin{align*}
        \mathcal{L}(f) = \mathcal{L}(\equiproj{f})
        + \equigap{f}\geq \mathcal{L}(\equiproj{f}).
    \end{align*}
Moreover, if $f^{\star}$ is a minimizer of $\mathcal{L}$ over all measurable functions, then $f^{\star}$ must be target equivariant.
\end{proposition}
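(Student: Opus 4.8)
The plan is to prove the decomposition identity first and then derive both the non-negativity of the gap and the equivariance of the minimizer as consequences. The identity $\mathcal{L}(f) = \mathcal{L}(\equiproj{f}) + \equigap{f}$ is immediate from the definition of $\equigap{f}$ in \cref{eq:equivariance_gap}, so the real content is showing $\equigap{f} \ge 0$ with equality iff $f$ is target equivariant.

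First I would show that $\mathcal{L}$ is invariant under the ``twisted'' action $f \mapsto f^\sigma$ defined by $f^\sigma_{X,Y}(X^\star) := \sigma^{-1}\parens{f_{X,\sigma(Y)}(X^\star)}$. This uses both assumptions: rewrite $\mathcal{L}(f^\sigma) = \E\brackets{\ell\parens{\sigma^{-1}(f_{X,\sigma(Y)}(X^\star)), Y^\star}}$, then apply \cref{assump:convex_loss} (permutation invariance of $\ell$) to replace this by $\E\brackets{\ell\parens{f_{X,\sigma(Y)}(X^\star), \sigma(Y^\star)}}$, and finally change variables $Y \mapsto \sigma^{-1}(Y)$, $Y^\star \mapsto \sigma^{-1}(Y^\star)$ inside the expectation, which is legitimate because \cref{assump:invariant_distribution} says the data distribution is invariant under this simultaneous relabeling. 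This yields $\mathcal{L}(f^\sigma) = \mathcal{L}(f)$ for every $\sigma \in \mathfrak{S}_{\numlabels}$, hence $\mathcal{L}(f) = \E_\sigma[\mathcal{L}(f^\sigma)]$.

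Next I would push the expectation over $\sigma$ inside. Since $\equiproj{f}_{X,Y}(X^\star) = \E_\sigma\brackets{f^\sigma_{X,Y}(X^\star)}$ by \cref{eq:equivariant-sum}, Jensen's inequality applied pointwise to the convex map $y \mapsto \ell(y, Y^\star)$ gives, for each fixed realization of the data, $\ell\parens{\equiproj{f}_{X,Y}(X^\star), Y^\star} \le \E_\sigma\brackets{\ell\parens{f^\sigma_{X,Y}(X^\star), Y^\star}}$. Taking the outer expectation over the data and interchanging the two expectations (Fubini, valid once $\ell$ is integrable, which I would note is implicit in $\mathcal{L}$ being well-defined) yields $\mathcal{L}(\equiproj{f}) \le \E_\sigma[\mathcal{L}(f^\sigma)] = \mathcal{L}(f)$, i.e. $\equigap{f} \ge 0$. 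For the equality case, strict convexity in \cref{assump:convex_loss} forces the Jensen inequality to be an equality only when $f^\sigma_{X,Y}(X^\star)$ is $\sigma$-almost-surely constant in $\sigma$, for $p$-almost every data point; unwinding the definition of $f^\sigma$, constancy over all of $\mathfrak{S}_{\numlabels}$ means $\sigma^{-1}(f_{X,\sigma(Y)}(X^\star)) = f_{X,Y}(X^\star)$ for all $\sigma$, which is exactly \cref{def:equi}, so $f$ is (a.e.) target equivariant.

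Finally, for the minimizer claim: if $f^\star$ minimizes $\mathcal{L}$ over all measurable functions, then $\equiproj{f^\star}$ is also measurable and $\mathcal{L}(\equiproj{f^\star}) \le \mathcal{L}(f^\star)$ by the decomposition; minimality forces equality, hence $\equigap{f^\star} = 0$, hence $f^\star$ is target equivariant by the equality case just established. The main obstacle I anticipate is the equality-case analysis: I need to be careful that ``Jensen is tight iff the integrand is a.s. constant'' is being applied with respect to the uniform measure on the finite group $\mathfrak{S}_{\numlabels}$ (so ``a.s. constant'' genuinely means constant on the whole group), and that the ``for almost every data realization'' qualifier is handled cleanly so the conclusion matches \cref{def:equi} in the appropriate a.e. sense; I would also want to state explicitly that $\equiproj{f}$ inherits measurability from $f$, since the averaging is over a finite set this is routine but should be mentioned.
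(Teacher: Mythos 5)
Your proposal is correct and follows essentially the same route as the paper's proof: rewrite $\mathcal{L}$ as an expectation over permutations using \cref{assump:convex_loss,assump:invariant_distribution}, apply Jensen's inequality to the strictly convex map $y\mapsto\ell(y,Y^{\star})$ to get non-negativity of the gap with equality iff $f=\equiproj{f}$, and deduce equivariance of the minimizer by contradiction. Your treatment of the Jensen equality case and the measurability/Fubini caveats is in fact slightly more careful than the paper's; the only thing the paper does that you do not is an additional (and, for this statement, inessential) explicit computation expressing the gap as a mean-squared deviation from $\equiproj{f}$ in the quadratic-loss case.
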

\begin{proof}[Proof sketch.] The key step is to express the objective as an expectation over permutations using   \cref{assump:convex_loss,assump:invariant_distribution} on the data and loss: $
        \mathcal{L}(f) = \mathbb{E}_p\mathbb{E}_{\sigma} \brackets{\ell\parens{\sigma^{-1}f_{X,\sigma(Y)}\parens{X^{\star}},Y^{\star}}}$. The non-negativity of the equivariance gap is then established using Jensen's inequality by convexity of the loss $\ell$ in its first argument (\cref{assump:convex_loss}). 
Now, assume by contradiction that $f^{\star}$ is not equivariant and note that $\equiproj{f^{\star}}$ is a measurable function by construction. 
It follows that $\equigap{f^{\star}}> 0$, which directly implies that $\mathcal{L}(f^{\star})> \mathcal{L}(\equiproj{f^{\star}})$, thus contradicting the optimality of $f^{\star}$. 
\end{proof}
\cref{prop:error_decomposition} shows that minimizing the objective $\mathcal{L}$ results in a target equivariant function.  
 Hence, using a non-equivariant model must employ some of its expressive power solely for being equivariant, which can be wasteful, as we verify empirically in \cref{sec:non_equi_tabpfn} below. 

\subsection{Non-equivariance of TabPFNs models}\label{sec:non_equi_tabpfn}

PFNs models, as introduced in \citet{hollmanntabpfn,hollmann2025nature} are not permutation equivariant in the target's components. Consequently, they are not guaranteed to provide consistent predictions when the target components are permuted, thus affecting their robustness. 

\begin{figure}
  \begin{minipage}[t]{0.62\linewidth}
    \includegraphics[width=\linewidth]{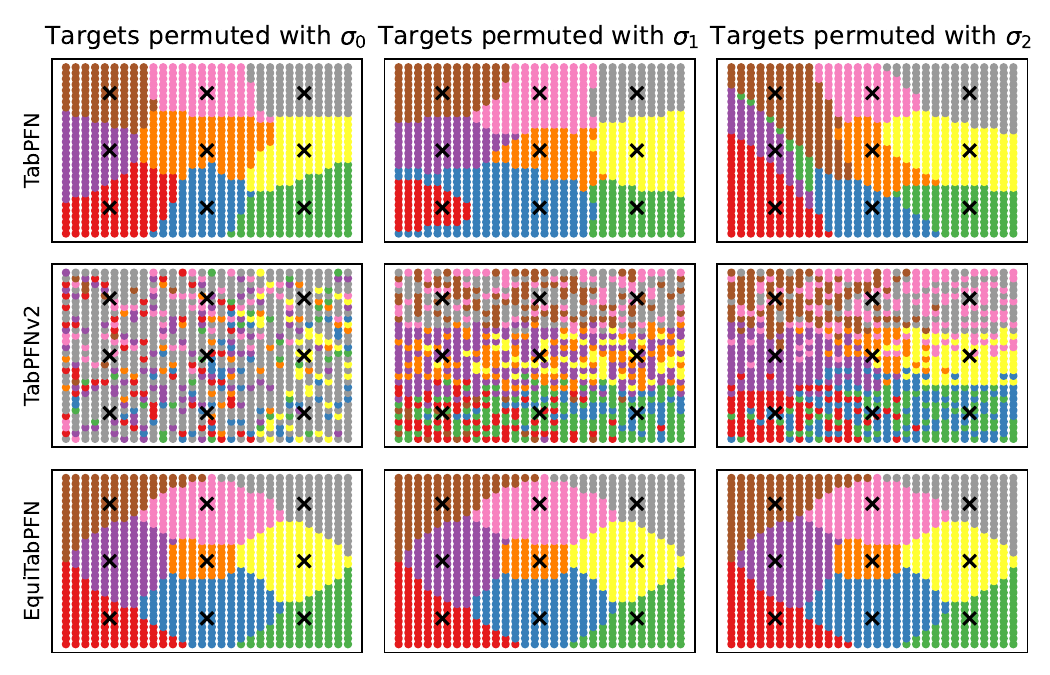}
    \caption{\small 
    Prediction comparison of TabPFN, TabPFN-v2, and our model on the same datasets with three different class orderings (one per column). Models predict on a dense grid using 9 distinct training points, marked with dark crosses, each having a distinct class.
}\label{fig:equi-illustration}
  \end{minipage}\hfill
  \begin{minipage}[t]{0.35\linewidth}
    \includegraphics[width=\linewidth]{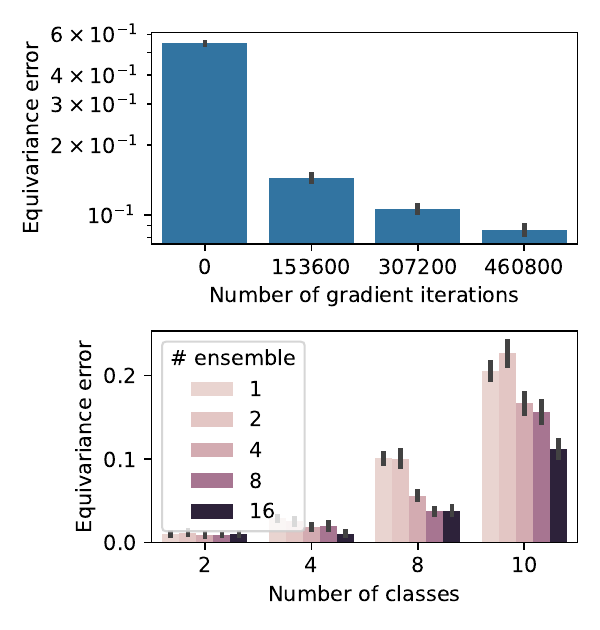}
    \caption{\small Equivariance error for TabPFN observed while training (top) and at inference with different number of classes and ensembles (bottom). 
  \label{fig:equi-gap-ensemble}
} 
  \end{minipage}
\end{figure}

{\bf Predictions instabilities.} To illustrate the implications of non-equivariance on the robustness of PFNs models, we consider a toy classification problem in $2$ dimensions, where $9$ training points are positioned on a $2$-dimensional regular grid, each corresponding to a different class inspired by \citet{mccarter2024}. Different pretrained models are then used to predict the classes on a regular grid of $40^2$ points. 
\cref{fig:equi-illustration} shows the classification boundaries when using the same data but with $3$ different orderings for the classes. It is clear that the ordering heavily affects the prediction even in this simple example, which strongly impacts robustness of models like TabPFN and TabPFN-v2. The predictions of TabPFN-v2 are particularly noisy due to having only 9 training data points which is not an issue in itself, unlike the extreme unstability to the class ordering. \crcchange{Note that the axis are scaled differently for presentation.}

{\bf Target equivariance gap during training.} In \cref{fig:equi-gap-ensemble}, we analyse the equivariance error while training TabPFN. 
\cref{fig:equi-gap-ensemble} (left) shows the equivariance error of TabPFN in terms of percentage of violation of \cref{eq:equivariance}, e.g. how frequently the predicted classes $f_{X,\sigma(Y)}(X^*)$ and $\sigma\parens{f_{X,Y}(X^*)}$ differ. We sample 512 datasets  from the prior and report standard deviation. In \cref{fig:equi-gap-ensemble} (left), the  equivariance error is  clear and slowly decreases during training. This non-equivariance 1) induces additional errors for the model as demonstrated in  \cref{prop:error_decomposition} and 2) causes the model to provide surprising predictions given that permuting the output order can change the results as seen in \cref{fig:equi-illustration}. 

{\bf Mitigation via costly ensembling.}   \citet{hollmanntabpfn} introduce a mitigation strategy to non-equivariance by averaging several predictions using random permutations of the target dimensions. 
Averaging over all possible permutations gives an equivariant function as discussed in \cref{eq:equivariant-sum}. However, this requires making $\mathcal{O}(\numlabels!)$ calls to the model, where $\numlabels$ is the number of classes. This becomes quickly prohibitive, even for $\numlabels=10$ as considered in the original study. 
\crcchange{Randomized estimators, using $N_{ens}$ random permutations can compute a prediction that converge to the averaged one at a rate of  $1/\sqrt{N_{ens}}$. However, the variance of such estimator would typically present a dependence on the dimension that is, unfortunately, challenging to quantify in general. 
This variance is highest when the function is far from being equivariant, constituting the worst scenarios.}
We illustrate how fast the model becomes equivariant when using ensembles in \cref{fig:equi-gap-ensemble} (right). While ensembling more permutation helps to make the model {\em more} equivariant, many ensembles are required, in particular, when considering more classes. In contrast, the model we propose is fully target equivariant so that predictions remain the same regardless of the class ordering as  illustrated in \cref{fig:equi-illustration} (bottom). 

\section{Experiments}\label{sec:experiments}

\subsection{Experimental setup}

\paragraph{Pretraining.}
We trained our model, \methodname{} on artificial dataset extending the public code of \citet{muller2023mothernet} and using the same procedure from \citet{hollmanntabpfn} employed to train TabPFNv1. Crucially, it was trained on classification tasks with less than $10$ classes.  
The total training time took approximately 4 days on a single A100 GPU with 80GB of memory. 
We set our architectures hyperparameters so that they match the number of parameters of previous work, in particular the model we trained contains $\sim$25M parameters similar to the baselines we consider and we match most our hyperparameters to the same values as \citet{hollmanntabpfn}. We refer to \cref{sec:additional_experimental_details} for further training details and description of the hyperparameters used.

\paragraph{Benchmarks.}
For evaluation, we consider classification tasks from the TabZilla benchmark \cite{mcelfresh2023neural}. 
To assess the impact of unseen class counts during evaluation (i.e., exceeding $10$), we consider two  setups: one on $76$ multi-class tasks with at most $10$ classes, and another on $10$ tasks with more than $10$ classes.
Details of the datasets/tasks used are provided in \cref{table:datasets,table:ood_datasets_cc18}. In all results, except those of \cref{fig:bubble_plot} (left), we report the results of \methodname{} without ensembling as it is not critical for the performance of our method.   

\paragraph{Baselines.}
We consider baselines from the TabZilla benchmark. In addition, we compare with {\em \tabpfn}, the state-of-the-art model with open-weights released by \cite{hollmann2025nature} and which was shown to out-perform standard baselines, including those appearing in the TabZilla benchmark. 
This model was pretrained using an improved prior compared to initial version TabPFNv1. However, the code for the prior and the training is not publicly available. Hence, to allow a fair comparaison, we also include a second version, \ourtabpfn{}, using the exact same architecture as \tabpfn{}, but which we trained on the same publicly available prior and training code of \cite{muller2023mothernet} that we used for our model.

\begin{figure}
\includegraphics[width=0.5\linewidth]{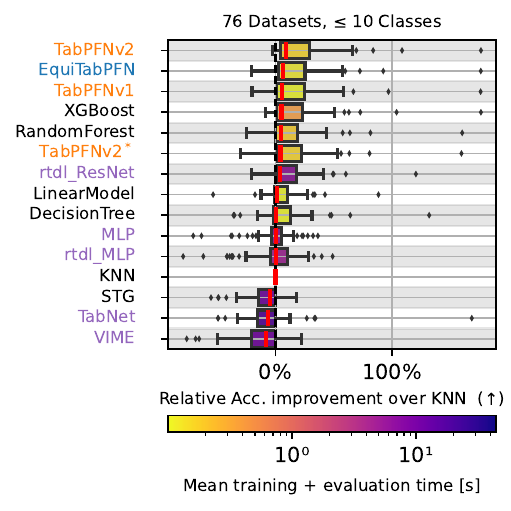}
	\includegraphics[width=0.5\linewidth]{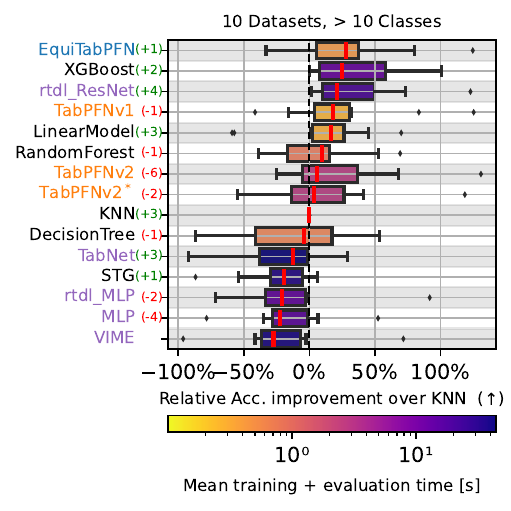}	\caption{Relative improvement over KNN for datasets with less than 10 classes (left) and more than 10 classes (right). Red lines are the median metric over datasets after averaging each dataset over $10$ splits. The runtime is displayed with color on a log scale and is reported on a V100 GPU for PFNs. \label{fig:box_plots}}
\end{figure}

\subsection{Main results}

This section presents the main experimental findings. Additional results, including ablation of the \methodname{} architecture and computational cost comparisons are found in \cref{sec:ablation,sec:computational_cost}.

\begin{figure}
\center
\includegraphics[height=5.5cm]{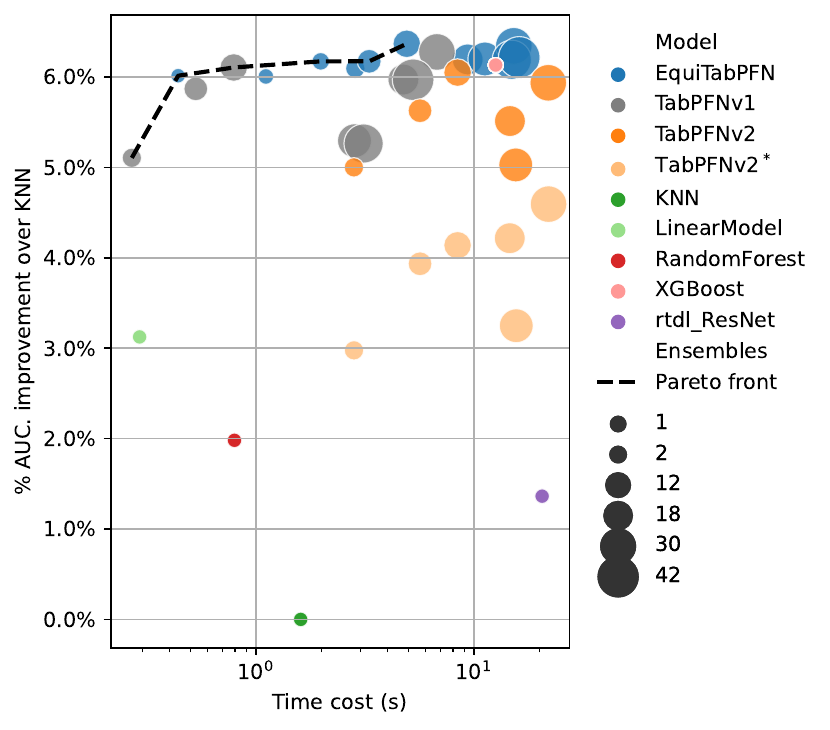}
  \includegraphics[height=5.5cm]{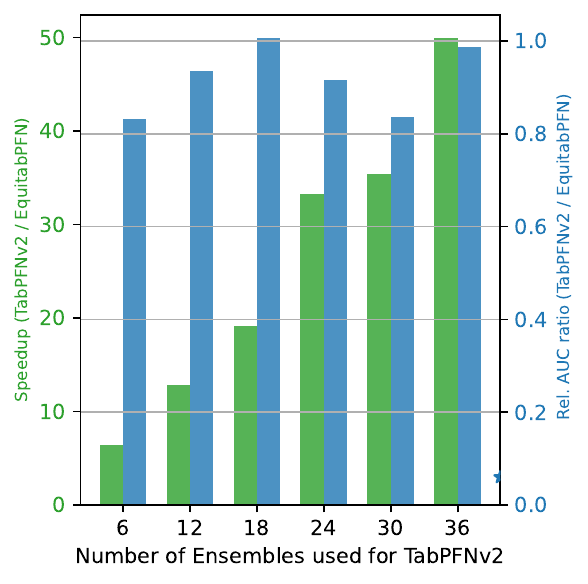} 
  \caption{Left: Scatter plot of runtime vs $\%$ AUC  improvement over KNN for different methods. Right: barplot of AUC ratio relatively to KNN (blue) and speedup of \methodname{} (green). 
  In both figures, we increase the number of ensembles of TabPFNs variants.
   \label{fig:bubble_plot}}  
\end{figure}

{\bf \methodname{} enables in-context-learning on datasets with unseen class counts.} \cref{fig:box_plots} shows accuracies relative to the KNN baseline for all models on $76$  datasets with less than $10$ classes (left) and 10 datasets with more than $10$ classes (right). When considering datasets with more than $10$ classes (right),
\methodname{} obtains the best median relative accuracy across all datasets. It strongly outperforms \tabpfn{}, which performs worse than a linear model or random forests in terms of relative median. These results show that target-equivariance allows \methodname{} to seamlessly generalize to larger numbers of classes even though it was trained on datasets with less than $10$ classes, just like \tabpfn{}. Additional evaluations in \cref{table:ood_datasets,table:regular_datasets} of \cref{sec:additional_results} show that such improvement is consistent over various metrics: AUC, Accuracy and F1 score, for datasets with unseen class counts.   

{\bf Competitive performance on datasets with class counts seen during pre-training.} 
On the 76 datasets with less than 10 classes (\cref{fig:box_plots}, left), \methodname{} performs comparably to the state-of the art method, \tabpfn{}, even though it did not benefit from the improved data prior used in \citet{hollmann2025nature} as it is not publicly available. 
To assess the impact of the pre-training prior on performance, we pre-trained the same network architecture as \tabpfn{} using the same publicly available protocol used for training our method and reported the results for reference (\ourtabpfn{}). The results show that \methodname{} consistently outperforms \ourtabpfn{} both on dataset with less than $10$ classes and those with unseen class count. These results suggest that \methodname{} would likely benefit from the improved training procedure and prior of \tabpfn{}.

{\bf Speedup over \tabpfn{} on datasets with unseen class counts.} 
As discussed in \cref{sec:background}, PFNs models 
\todo{David To Michael: We refer to TabPFN in the background section, I use TabPFNs here to design both...} 
cannot natively handle an arbitrary number of classes due to its decoder architecture. 
In order to apply \tabpfn{} to problems with more than $10$ classes, we employ the error-correcting output codes (ECOC) strategy  \citep{dietterich1994solving} as recommended in \citet{hollmann2025nature}.
Such approach requires decomposing the classification problem into several smaller classification tasks with up to $10$ classes, performing predictions for each sub-task using an ensemble of \tabpfn{} models, at least one model per task, then aggregating predictions using an ECOC strategy. 
This incurs an extra computational cost for performing ensembling. 
All results reported in \cref{fig:box_plots} (right) are using the minimal number of ensembles to guarantee coverages of all classes. Despite this sophisticated aggregation strategy, the performance of \tabpfn{} degrades significantly, while still incurring a substantial slow-down compared to \methodname{} as shown by the average run-times reported by color in \cref{fig:box_plots} (left). Note that, for datasets with less than $10$ classes (\cref{fig:box_plots}, left) ensembling is not required anymore by \tabpfn{} and the runtime of both methods are comparable.

{\bf \methodname{} achieves the best tradeoff between performance and cost on datasets with unseen class counts.}
To further illustrate  the improved trade-off between runtime and accuracy of our method, we show in \cref{fig:bubble_plot} (left)  the  improvement in AUC relatively to KNN for all methods when both \tabpfn{} and \methodname{} are allowed to have more ensembles.
Ensembling generally improves performance of PFN models, as it allows to make more robust predictions by averaging them over transformed version of the datasets, ex: by applying permutations of the labels order as discussed in \cref{sec:non_equi_tabpfn}. While ensembling helps improve performance of \tabpfn{}, the improvement is marginal compared to \methodname{} without ensembling and comes at a considerable computational cost, as also shown in \cref{fig:bubble_plot} (right) and on the critical difference diagrams in \cref{fig:critdiag_ood,fig:critdiag_regular} of \cref{sec:additional_results}. 
Therefore, amongst top performing methods,  \methodname{} achieves the best trade-off.%

\section{Conclusion}

{\bf Summary.} In this paper, we introduced \methodname{}, an architecture for PFNs that is equivariant to target permutations and enables in-context learning on datasets with arbitrary class counts. 
We established optimality of equivariant architectures for foundational tabular models and proved that  non-equivariant models worsens the pre-training objective with an incompressible error term. 
Finally, we empirically showed the benefits of \methodname{} both in terms of classification performance and inference runtime on synthetic and real-world datasets. 
We hope this work enables future developments on prior-fitted networks that incorporate such fundamental symmetry of tabular data.  

\crcchange{
{\bf Non-equivariance.} 
In some cases, the data may not be target equivariant, for instance on ordinal data. We found only a few of those cases in the benchmarks used (5 of the 86 datasets surveyed). Handling such cases would require to update the prior as it is currently equivariant to target permutation. The method could then be adapted by providing positional embedding or using column indices as input.
}

{\bf Limitations.} \methodname{} requires a quadratic extra-cost with the number of target dimensions to perform self-attention. While many tabular problems have a relatively small number of target dimensions, this may become a problem for a very large number of dimensions and future work could consider efficient self-attention to address this issue. This may be alleviated by work focused on improving the efficiency of PFNs models \cite{qu2025,zeng2025}.

\crcchange{
The code for training and evaluating our model is available at \url{https://github.com/MichaelArbel/EquiTabPFN/}.
}

\section*{Acknowledgments}
This work was supported by the ANR project BONSAI (grant ANR-23-CE23-0012-01) and was
granted access to the HPC resources of IDRIS under the allocation [AD011015767] made by GENCI.

\bibliography{biblio}

\begin{thebibliography}{39}
\providecommand{\natexlab}[1]{#1}
\providecommand{\url}[1]{\texttt{#1}}
\expandafter\ifx\csname urlstyle\endcsname\relax
  \providecommand{\doi}[1]{doi: #1}\else
  \providecommand{\doi}{doi: \begingroup \urlstyle{rm}\Url}\fi

\bibitem[Adriaensen et~al.(2023)Adriaensen, Rakotoarison, M\"{u}ller, and
  Hutter]{adriaensen23}
S.~Adriaensen, H.~Rakotoarison, S.~M\"{u}ller, and F.~Hutter.
\newblock Efficient bayesian learning curve extrapolation using prior-data
  fitted networks.
\newblock In \emph{Advances in Neural Information Processing Systems}. Curran
  Associates, Inc., 2023.

\bibitem[Ba et~al.(2016)Ba, Kiros, and Hinton]{Ba:2016}
J.~L. Ba, J.~R. Kiros, and G.~E. Hinton.
\newblock Layer normalization.
\newblock \emph{arXiv preprint arXiv:1607.06450}, 2016.

\bibitem[Brown et~al.(2020)Brown, Mann, Ryder, Subbiah, Kaplan, Dhariwal,
  Neelakantan, Shyam, Sastry, Askell, et~al.]{brown2020language}
T.~Brown, B.~Mann, N.~Ryder, M.~Subbiah, J.~D. Kaplan, P.~Dhariwal,
  A.~Neelakantan, P.~Shyam, G.~Sastry, A.~Askell, et~al.
\newblock Language models are few-shot learners.
\newblock \emph{Advances in neural information processing systems},
  33:\penalty0 1877--1901, 2020.

\bibitem[Cohen and Welling(2016)]{cohen2016}
T.~S. Cohen and M.~Welling.
\newblock Group equivariant convolutional networks, 2016.
\newblock URL \url{https://arxiv.org/abs/1602.07576}.

\bibitem[Dietterich and Bakiri(1994)]{dietterich1994solving}
T.~G. Dietterich and G.~Bakiri.
\newblock Solving multiclass learning problems via error-correcting output
  codes.
\newblock \emph{Journal of artificial intelligence research}, 2:\penalty0
  263--286, 1994.

\bibitem[Donoho and Johnstone(1998)]{donoho1998minimax}
D.~L. Donoho and I.~M. Johnstone.
\newblock Minimax estimation via wavelet shrinkage.
\newblock \emph{The annals of Statistics}, 26\penalty0 (3):\penalty0 879--921,
  1998.

\bibitem[Dooley et~al.(2024)Dooley, Khurana, Mohapatra, Naidu, and
  White]{dooley2024}
S.~Dooley, G.~S. Khurana, C.~Mohapatra, S.~V. Naidu, and C.~White.
\newblock Forecastpfn: Synthetically-trained zero-shot forecasting.
\newblock \emph{Advances in Neural Information Processing Systems}, 36, 2024.

\bibitem[Egressy and Stühmer(2025)]{egressy2025}
B.~Egressy and J.~Stühmer.
\newblock Set-llm: A permutation-invariant llm, 2025.
\newblock URL \url{https://arxiv.org/abs/2505.15433}.

\bibitem[Esteves et~al.(2018)Esteves, Allen-Blanchette, Makadia, and
  Daniilidis]{esteves2018}
C.~Esteves, C.~Allen-Blanchette, A.~Makadia, and K.~Daniilidis.
\newblock Learning so(3) equivariant representations with spherical cnns, 2018.
\newblock URL \url{https://arxiv.org/abs/1711.06721}.

\bibitem[Grinsztajn et~al.(2022)Grinsztajn, Oyallon, and
  Varoquaux]{grinsztajn2022tree}
L.~Grinsztajn, E.~Oyallon, and G.~Varoquaux.
\newblock Why do tree-based models still outperform deep learning on typical
  tabular data?
\newblock \emph{Advances in neural information processing systems},
  35:\penalty0 507--520, 2022.

\bibitem[Herbold(2020)]{Herbold2020}
S.~Herbold.
\newblock Autorank: A python package for automated ranking of classifiers.
\newblock \emph{Journal of Open Source Software}, 5\penalty0 (48):\penalty0
  2173, 2020.
\newblock \doi{10.21105/joss.02173}.
\newblock URL \url{https://doi.org/10.21105/joss.02173}.

\bibitem[Hollmann et~al.(2023)Hollmann, M{\"u}ller, Eggensperger, and
  Hutter]{hollmanntabpfn}
N.~Hollmann, S.~M{\"u}ller, K.~Eggensperger, and F.~Hutter.
\newblock Tab{PFN}: A transformer that solves small tabular classification
  problems in a second.
\newblock In \emph{The Eleventh International Conference on Learning
  Representations}, 2023.

\bibitem[Hollmann et~al.(2025)Hollmann, M{\"u}ller, Purucker, Krishnakumar,
  K{\"o}rfer, Hoo, Schirrmeister, and Hutter]{hollmann2025nature}
N.~Hollmann, S.~M{\"u}ller, L.~Purucker, A.~Krishnakumar, M.~K{\"o}rfer, S.~B.
  Hoo, R.~T. Schirrmeister, and F.~Hutter.
\newblock Accurate predictions on small data with a tabular foundation model.
\newblock \emph{Nature}, 637\penalty0 (8045):\penalty0 319--326, 2025.

\bibitem[Kaplan et~al.(2020)Kaplan, McCandlish, Henighan, Brown, Chess, Child,
  Gray, Radford, Wu, and Amodei]{kaplan2020scaling}
J.~Kaplan, S.~McCandlish, T.~Henighan, T.~B. Brown, B.~Chess, R.~Child,
  S.~Gray, A.~Radford, J.~Wu, and D.~Amodei.
\newblock Scaling laws for neural language models.
\newblock \emph{arXiv preprint arXiv:2001.08361}, 2020.

\bibitem[Kingma and Ba(2015)]{kingmaadam}
D.~P. Kingma and J.~Ba.
\newblock Adam: A method for stochastic optimization.
\newblock In \emph{ICLR}, 2015.

\bibitem[Koshil et~al.(2025)Koshil, Feurer, and Eggensperger]{koshil2025}
M.~Koshil, M.~Feurer, and K.~Eggensperger.
\newblock In-context learning of soft nearest neighbor classifiers for
  intelligible tabular machine learning.
\newblock In \emph{The 4th Table Representation Learning Workshop at ACL 2025},
  2025.
\newblock URL \url{https://openreview.net/forum?id=vLttpF8AOv}.

\bibitem[LeCun et~al.(1989)LeCun, Boser, Denker, Henderson, Howard, Hubbard,
  and Jackel]{lecun89}
Y.~LeCun, B.~Boser, J.~S. Denker, D.~Henderson, R.~E. Howard, W.~Hubbard, and
  L.~D. Jackel.
\newblock Backpropagation applied to handwritten zip code recognition.
\newblock \emph{Neural Computation}, 1\penalty0 (4):\penalty0 541--551, 1989.
\newblock \doi{10.1162/neco.1989.1.4.541}.

\bibitem[Loshchilov and Hutter(2017)]{loshchilov-iclr17a}
I.~Loshchilov and F.~Hutter.
\newblock {SGDR}: Stochastic gradient descent with warm restarts.
\newblock In \emph{Proceedings of the International Conference on Learning
  Representations ({ICLR}'17)}, 2017.
\newblock Published online: \url{iclr.cc}.

\bibitem[Margeloiu et~al.(2024)Margeloiu, Bazaga, Simidjievski, Lio, and
  Jamnik]{margeloiu2024tabmda}
A.~Margeloiu, A.~Bazaga, N.~Simidjievski, P.~Lio, and M.~Jamnik.
\newblock Tab{MDA}: Tabular manifold data augmentation for any classifier using
  transformers with in-context subsetting.
\newblock In \emph{ICML 2024 Workshop on In-Context Learning}, 2024.
\newblock URL \url{https://openreview.net/forum?id=tntVlbDdoD}.

\bibitem[McCarter(2024)]{mccarter2024}
C.~McCarter.
\newblock What exactly has tab{PFN} learned to do?
\newblock In \emph{The Third Blogpost Track at ICLR 2024}, 2024.
\newblock URL \url{https://openreview.net/forum?id=BbSrxfIpoW}.

\bibitem[McElfresh et~al.(2023)McElfresh, Khandagale, Valverde, Prasad~C,
  Ramakrishnan, Goldblum, and White]{mcelfresh2023neural}
D.~McElfresh, S.~Khandagale, J.~Valverde, V.~Prasad~C, G.~Ramakrishnan,
  M.~Goldblum, and C.~White.
\newblock When do neural nets outperform boosted trees on tabular data?
\newblock \emph{Advances in Neural Information Processing Systems},
  36:\penalty0 76336--76369, 2023.

\bibitem[M{\"u}ller et~al.(2023)M{\"u}ller, Curino, and
  Ramakrishnan]{muller2023mothernet}
A.~M{\"u}ller, C.~Curino, and R.~Ramakrishnan.
\newblock Mothernet: A foundational hypernetwork for tabular classification.
\newblock \emph{arXiv preprint arXiv:2312.08598}, 2023.

\bibitem[M\"{u}ller et~al.(2023)M\"{u}ller, Feurer, Hollmann, and
  Hutter]{muller23a}
S.~M\"{u}ller, M.~Feurer, N.~Hollmann, and F.~Hutter.
\newblock {PFN}s4{BO}: In-context learning for {B}ayesian optimization.
\newblock In A.~Krause, E.~Brunskill, K.~Cho, B.~Engelhardt, S.~Sabato, and
  J.~Scarlett, editors, \emph{Proceedings of the 40th International Conference
  on Machine Learning}, volume 202 of \emph{Proceedings of Machine Learning
  Research}, pages 25444--25470. PMLR, 23--29 Jul 2023.
\newblock URL \url{https://proceedings.mlr.press/v202/muller23a.html}.

\bibitem[Müller et~al.(2024)Müller, Siems, Nori, Salinas, Zela, Caruana, and
  Hutter]{mueller2024gamformer}
A.~Müller, J.~Siems, H.~Nori, D.~Salinas, A.~Zela, R.~Caruana, and F.~Hutter.
\newblock Gamformer: In-context learning for generalized additive models.
\newblock \emph{arXiv preprint arXiv:2410.04560}, 2024.

\bibitem[Nadaraya(1964)]{nadaraya1964estimating}
E.~A. Nadaraya.
\newblock On estimating regression.
\newblock \emph{Theory of Probability \& Its Applications}, 9\penalty0
  (1):\penalty0 141--142, 1964.

\bibitem[Nagler(2023)]{nagler2023statistical}
T.~Nagler.
\newblock Statistical foundations of prior-data fitted networks.
\newblock In \emph{International Conference on Machine Learning}, pages
  25660--25676. PMLR, 2023.

\bibitem[Qu et~al.(2025{\natexlab{a}})Qu, Holzm{\"u}ller, Varoquaux, and
  Morvan]{qu2025tabicl}
J.~Qu, D.~Holzm{\"u}ller, G.~Varoquaux, and M.~L. Morvan.
\newblock Tabicl: A tabular foundation model for in-context learning on large
  data.
\newblock \emph{arXiv preprint arXiv:2502.05564}, 2025{\natexlab{a}}.

\bibitem[Qu et~al.(2025{\natexlab{b}})Qu, Holzmüller, Varoquaux, and
  Morvan]{qu2025}
J.~Qu, D.~Holzmüller, G.~Varoquaux, and M.~L. Morvan.
\newblock Tabicl: A tabular foundation model for in-context learning on large
  data, 2025{\natexlab{b}}.
\newblock URL \url{https://arxiv.org/abs/2502.05564}.

\bibitem[Robertson et~al.(2024)Robertson, Hollmann, Awad, and
  Hutter]{robertson2024}
J.~Robertson, N.~Hollmann, N.~Awad, and F.~Hutter.
\newblock Fairpfn: Transformers can do counterfactual fairness, 2024.
\newblock URL \url{https://arxiv.org/abs/2407.05732}.

\bibitem[Satorras et~al.(2022)Satorras, Hoogeboom, and Welling]{satorras2022}
V.~G. Satorras, E.~Hoogeboom, and M.~Welling.
\newblock E(n) equivariant graph neural networks, 2022.
\newblock URL \url{https://arxiv.org/abs/2102.09844}.

\bibitem[Silla and Freitas(2011)]{silla2011survey}
C.~N. Silla and A.~A. Freitas.
\newblock A survey of hierarchical classification across different application
  domains.
\newblock \emph{Data mining and knowledge discovery}, 22:\penalty0 31--72,
  2011.

\bibitem[Tsybakov(2009)]{Tsybakov:2009}
A.~B. Tsybakov.
\newblock \emph{Introduction to {Nonparametric} {Estimation}}.
\newblock Springer {Series} in {Statistics}. Springer-Verlag, 2009.
\newblock ISBN 978-0-387-79051-0.
\newblock \doi{10.1007/b13794}.

\bibitem[Vaswani et~al.(2017)Vaswani, Shazeer, Parmar, Uszkoreit, Jones, Gomez,
  Kaiser, and Polosukhin]{Vaswani:2017}
A.~Vaswani, N.~Shazeer, N.~Parmar, J.~Uszkoreit, L.~Jones, A.~N. Gomez,
  {\L}.~Kaiser, and I.~Polosukhin.
\newblock Attention is all you need.
\newblock In \emph{Advances in neural information processing systems}, 2017.

\bibitem[Watson(1964)]{watson1964smooth}
G.~S. Watson.
\newblock Smooth regression analysis.
\newblock \emph{Sankhy{\=a}: The Indian Journal of Statistics, Series A}, pages
  359--372, 1964.

\bibitem[Wu and Bergman(2025)]{wu2025zeroshotmetalearningtabularprediction}
Y.~Wu and D.~L. Bergman.
\newblock Zero-shot meta-learning for tabular prediction tasks with
  adversarially pre-trained transformer, 2025.
\newblock URL \url{https://arxiv.org/abs/2502.04573}.

\bibitem[Ye et~al.(2024)Ye, Yin, Zhan, and Chao]{ye2024}
H.-J. Ye, H.-H. Yin, D.-C. Zhan, and W.-L. Chao.
\newblock Revisiting nearest neighbor for tabular data: A deep tabular baseline
  two decades later.
\newblock \emph{arXiv preprint arXiv:2407.03257}, 2024.

\bibitem[Zaheer et~al.(2018)Zaheer, Kottur, Ravanbakhsh, Poczos, Salakhutdinov,
  and Smola]{zaheer2018}
M.~Zaheer, S.~Kottur, S.~Ravanbakhsh, B.~Poczos, R.~Salakhutdinov, and
  A.~Smola.
\newblock Deep sets, 2018.
\newblock URL \url{https://arxiv.org/abs/1703.06114}.

\bibitem[Zeng et~al.(2025)Zeng, Dinh, Kang, and Mueller]{zeng2025}
Y.~Zeng, T.~Dinh, W.~Kang, and A.~C. Mueller.
\newblock Tabflex: Scaling tabular learning to millions with linear attention,
  2025.
\newblock URL \url{https://arxiv.org/abs/2506.05584}.

\bibitem[Zheng et~al.(2023)Zheng, Chiang, Sheng, Zhuang, Wu, Zhuang, Lin, Li,
  Li, Xing, Zhang, Gonzalez, and Stoica]{zheng2023}
L.~Zheng, W.-L. Chiang, Y.~Sheng, S.~Zhuang, Z.~Wu, Y.~Zhuang, Z.~Lin, Z.~Li,
  D.~Li, E.~P. Xing, H.~Zhang, J.~E. Gonzalez, and I.~Stoica.
\newblock Judging llm-as-a-judge with mt-bench and chatbot arena, 2023.
\newblock URL \url{https://arxiv.org/abs/2306.05685}.

\end{thebibliography}

\newpage

\appendix

\listoftodos

\section{Proofs}\label{sec:proofs}
\begin{proof}[Proof of \cref{prop:error_decomposition}]
    By \cref{assump:invariant_distribution} we can express $\mathcal{L}$ as an expectation of the form:
    \begin{align}
        \mathcal{L}(f) &= \mathbb{E}_p\mathbb{E}_{\sigma} \brackets{\ell\parens{f_{X,\sigma(Y)}\parens{X^{\star}},\sigma(Y^{\star})}}.
    \end{align}

By \cref{assump:convex_loss}, $\ell$ is invariant to permutations, which allows to further write:
    \begin{align}
        \mathcal{L}(f) &= \mathbb{E}_p\mathbb{E}_{\sigma} \brackets{\ell\parens{\sigma^{-1}\parens{f_{X,\sigma(Y)}\parens{X^{\star}}},Y^{\star}}}.
    \end{align}

We will show that $\equigap{f}=\mathcal{L}(f)-\mathcal{L}(\equiproj{f})$ is non-negative and vanishes only when $f$ is equivariant. This is a direct consequence of Jensen's inequality applied to the strictly convex function $y\mapsto \ell(y,y^{\star})$ (\cref{assump:convex_loss}). Indeed, for any samples $(X,Y,X^{\star},Y^{\star})$, the following holds:
\begin{align*}
    \ell\parens{\equiproj{f}_{X,Y}\parens{X^{\star}},Y^{\star}} :&= \ell\parens{\mathbb{E}_{\sigma}\brackets{\sigma^{-1}\parens{f_{X,\sigma(Y)}\parens{X^{\star}}}},Y^{\star}}\\
    &\leq \mathbb{E}_{\sigma}\ell\parens{\sigma^{-1}\parens{f_{X,\sigma(Y)}\parens{X^{\star}}},Y^{\star}},
\end{align*} 
where the first line follows by definition of $\equiproj{f}$ while the second line uses Jensen's inequality. Further taking the expectation w.r.t. $p$ shows that $\equigap{f}\geq 0$. If $\equigap{f}=0$, then by the above inequality it holds that $\ell\parens{\equiproj{f}_{X,Y}\parens{X^{\star}},Y^{\star}} = \ell\parens{f_{X,Y}\parens{X^{\star}},Y^{\star}}$ almost surely. However, since $\ell$ is strictly convex in its first argument (\cref{assump:convex_loss}), the previous equality is only possible when $\equiproj{f}=f$ almost surely, meaning that $f$ is equivariant. 

Finally, to show the final result, we note that:
\begin{align*}
    \equigap{f} =& \mathcal{L}(f)- \mathcal{L}(\equiproj{f})\\
    =& \mathbb{E}_{p}\mathbb{E}_{\sigma}\brackets{\Verts{\sigma^{-1}\parens{f_{X,\sigma(Y)}\parens{X^{\star} }} - \equiproj{f}_{X,Y}\parens{X^{\star}} 
    + \equiproj{f}_{X,Y}\parens{X^{\star}}
    - Y^{\star}}^2} - \mathcal{L}(\equiproj{f})\\
    =& \mathbb{E}_{p}\mathbb{E}_{\sigma}\brackets{\Verts{\sigma^{-1}\parens{f_{X,\sigma(Y)}\parens{X^{\star} }} - \equiproj{f}_{X,Y}\parens{X^{\star}}}^2}\\
    &+2 \mathbb{E}_{p}\mathbb{E}_{\sigma}\brackets{\parens{\sigma^{-1}\parens{f_{X,\sigma(Y)}\parens{X^{\star} }} - \equiproj{f}_{X,Y}\parens{X^{\star}}}^{\top}\parens{\equiproj{f}_{X,Y}\parens{X^{\star}}-Y^{\star}}}\\
    =&\mathbb{E}_{p}\mathbb{E}_{\sigma}\brackets{\Verts{\sigma^{-1}\parens{f_{X,\sigma(Y)}\parens{X^{\star} }} - \equiproj{f}_{X,Y}\parens{X^{\star}}}^2}\\ 
    &+2 \underbrace{\mathbb{E}_{p}\brackets{\parens{\mathbb{E}_{\sigma}\brackets{\sigma^{-1}\parens{f_{X,\sigma(Y)}\parens{X^{\star} }}} - \equiproj{f}_{X,Y}\parens{X^{\star}}}^{\top}\parens{\equiproj{f}_{X,Y}\parens{X^{\star}}-Y^{\star}}}}_{=0}.
\end{align*}
Here, the cross-product term equals $0$ since $\mathbb{E}_{\sigma}\brackets{\sigma^{-1}\parens{f_{X,\sigma(Y)}\parens{X^{\star} }}} = \equiproj{f}_{X,Y}\parens{X^{\star}}$ by definition of $\equiproj{f}$. Hence, we have shown that:
\begin{align*}
    \equigap{f} = \mathbb{E}_{p}\mathbb{E}_{\sigma}\brackets{\Verts{\sigma^{-1}\parens{f_{X,\sigma(Y)}\parens{X^{\star} }} - \equiproj{f}_{X,Y}\parens{X^{\star}}}^2}
\end{align*}
Finally, we use the invariance of the squared error to permutations, the equivariance of $\equiproj{f}$ to permutations, and the invariance of $p$ to permutations to get:
\begin{align*}
    \equigap{f} &= \mathbb{E}_{p}\mathbb{E}_{\sigma}\brackets{\Verts{f_{X,\sigma(Y)}\parens{X^{\star} } - \sigma\parens{\equiproj{f}_{X,Y}\parens{X^{\star}}}}^2}\\
    &= \mathbb{E}_{p}\mathbb{E}_{\sigma}\brackets{\Verts{f_{X,\sigma(Y)}\parens{X^{\star}} -  \equiproj{f}_{X,\sigma(Y)}\parens{X^{\star}}}^2}\\
    &= \mathbb{E}_{p}\brackets{\Verts{f_{X,Y}\parens{X^{\star} } - \equiproj{f}_{X,Y}\parens{X^{\star}}}^2}.
\end{align*}
\end{proof}

\section{Additional Experimental details}
\label{sec:additional_experimental_details}
{\bf Training procedure.} 
We use a similar training protocol as in \citet{hollmanntabpfn}, in which the model is trained on classification datasets generated according to their proposed artificial dataset prior. In this protocol, each dataset has a fixed size of $1024$ and is split into training and test uniformly at random. The maximum number of classes is fixed to $10$, while the maximum dimension of the covariate vector is fixed to $100$. Following \citet{muller2023mothernet}, we represent the target $y$ as a one-hot encoding vector whose dimension is the number of classes in the dataset. Moreover, we employ the  exact same strategy for handling missing values in the covariates. 
Training is performed using 153600 batches of 72 synthetically generated datasets each, which means the model was exposed to $\sim$11M artificial datasets during pre-training, a similar order of magnitude of datasets used for pre-training TabPFN by \citet{hollmanntabpfn}. The total training time of the network lasts  approximately 4 days on a single A100 GPU with 80GB of GPU memory. The resulting network is then used for all our evaluations without altering its parameters. 

We used the Adam optimizer \citep{kingmaadam} with initial learning rate of $0.0001$ and linear-warmup scheduler for the first $10$ epochs followed by cosine annealing~\citep{loshchilov-iclr17a} as in \citet{hollmanntabpfn}.

{\bf Architecture details.} 
We use an \methodname{} network with $12$ self-attention layers alternating between both type of attention introduced in \cref{sec:equitabfn}: 6 blocks {\bf $\text{SelfAtt}_c$} and 6 blocks {\bf $\text{SelfAtt}_b$}. 
Each self-attention layer consists of a multi-head attention blocks with $4$ heads, embeddings of dimension $512$, and hidden layers of dimension $1024$. 
This choice ensures a fair comparison with the models used in \citet{hollmanntabpfn,muller2023mothernet}, since the number of parameters ($25.17M$) are of the same order when counting them as proposed in \citet{kaplan2020scaling}.

{\bf Datasets.} For the datasets with less than $10$ classes, we collect the ones with less than $3000$ samples and $100$ features and retain the datasets that contain the same number of classes accross all folds and splits. For the datasets with more than $10$ classes, we filter in addition the ones able to run inference on an 80GB A100 GPU. The datasets obtained are given in \cref{table:datasets} and \cref{table:ood_datasets}.

\begin{table}[ht]
\center
  \begin{subtable}[t]{0.48\textwidth}
    \centering
    \scalebox{0.6}{\begin{tabular}{rlrrr}
\toprule
taskId & name & Classes & Features & Samples \\
\midrule
3 & kr-vs-kp & 2 & 36 & 2556 \\
4 & labor & 2 & 16 & 45 \\
9 & autos & 6 & 25 & 163 \\
11 & balance-scale & 3 & 4 & 499 \\
14 & mfeat-fourier & 10 & 76 & 1600 \\
15 & breast-w & 2 & 9 & 559 \\
16 & mfeat-karhunen & 10 & 64 & 1600 \\
18 & mfeat-morphological & 10 & 6 & 1600 \\
22 & mfeat-zernike & 10 & 47 & 1600 \\
23 & cmc & 3 & 9 & 1177 \\
25 & colic & 2 & 26 & 294 \\
27 & colic & 2 & 22 & 294 \\
29 & credit-approval & 2 & 15 & 552 \\
31 & credit-g & 2 & 20 & 800 \\
35 & dermatology & 6 & 34 & 292 \\
37 & diabetes & 2 & 8 & 614 \\
39 & sonar & 2 & 60 & 166 \\
40 & glass & 6 & 9 & 170 \\
45 & splice & 3 & 60 & 2552 \\
47 & tae & 3 & 5 & 120 \\
48 & heart-c & 2 & 13 & 241 \\
49 & tic-tac-toe & 2 & 9 & 766 \\
50 & heart-h & 2 & 13 & 234 \\
53 & vehicle & 4 & 18 & 676 \\
54 & hepatitis & 2 & 19 & 123 \\
59 & iris & 3 & 4 & 120 \\
2079 & eucalyptus & 5 & 19 & 588 \\
2867 & anneal & 5 & 38 & 718 \\
3512 & synthetic-control & 6 & 60 & 480 \\
3540 & analcatdata-boxing1 & 2 & 3 & 96 \\
3543 & irish & 2 & 5 & 400 \\
3549 & analcatdata-authorship & 4 & 70 & 672 \\
3560 & analcatdata-dmft & 6 & 4 & 637 \\
3561 & profb & 2 & 9 & 536 \\
3602 & visualizing-environmental & 2 & 3 & 88 \\
3620 & fri-c0-100-5 & 2 & 5 & 80 \\
3647 & rabe-266 & 2 & 2 & 96 \\
3731 & visualizing-livestock & 2 & 2 & 104 \\
\bottomrule
\end{tabular}
}
  \end{subtable}
  \hfill
  \begin{subtable}[t]{0.48\textwidth}
    \centering
    \scalebox{0.6}{\begin{tabular}{rlrrr}
\toprule
taskId & name & Classes & Features & Samples \\
\midrule
3739 & analcatdata-chlamydia & 2 & 3 & 80 \\
3748 & transplant & 2 & 3 & 104 \\
3779 & fri-c3-100-5 & 2 & 5 & 80 \\
3797 & socmob & 2 & 5 & 924 \\
3902 & pc4 & 2 & 37 & 1166 \\
3903 & pc3 & 2 & 37 & 1249 \\
3913 & kc2 & 2 & 21 & 416 \\
3917 & kc1 & 2 & 21 & 1687 \\
3918 & pc1 & 2 & 21 & 887 \\
9946 & wdbc & 2 & 30 & 455 \\
9957 & qsar-biodeg & 2 & 41 & 843 \\
9971 & ilpd & 2 & 10 & 465 \\
9978 & ozone-level-8hr & 2 & 72 & 2026 \\
9979 & cardiotocography & 10 & 35 & 1700 \\
9984 & fertility & 2 & 9 & 80 \\
10089 & acute-inflammations & 2 & 6 & 96 \\
10093 & banknote-authentication & 2 & 4 & 1096 \\
10101 & blood-transfusion-service-center & 2 & 4 & 598 \\
14954 & cylinder-bands & 2 & 37 & 432 \\
14967 & cjs & 6 & 33 & 2236 \\
125920 & dresses-sales & 2 & 12 & 400 \\
125921 & LED-display-domain-7digit & 10 & 7 & 400 \\
145793 & yeast & 4 & 8 & 1015 \\
145799 & breast-cancer & 2 & 9 & 228 \\
145836 & blood-transfusion-service-center & 2 & 4 & 598 \\
145847 & hill-valley & 2 & 100 & 968 \\
145984 & ionosphere & 2 & 34 & 280 \\
146024 & lung-cancer & 3 & 56 & 24 \\
146063 & hayes-roth & 3 & 4 & 128 \\
146065 & monks-problems-2 & 2 & 6 & 480 \\
146192 & car-evaluation & 4 & 21 & 1382 \\
146210 & postoperative-patient-data & 2 & 8 & 70 \\
146800 & MiceProtein & 8 & 77 & 864 \\
146817 & steel-plates-fault & 7 & 27 & 1552 \\
146818 & Australian & 2 & 14 & 552 \\
146819 & climate-model-simulation-crashes & 2 & 18 & 432 \\
146821 & car & 4 & 6 & 1382 \\
146822 & segment & 7 & 16 & 1848 \\
\bottomrule
\end{tabular}}
  \end{subtable}
  \caption{List of the $76$ datasets with less than $10$ classes used for evaluating \methodname{}. The datasets are extracted from the TabZilla benchmark \citep{mcelfresh2023neural} and have a number of classes no greater than $10$.  Here \emph{taskId} is the OpenML ID of the task, \emph{Classes} indicates the number of classes, \emph{Features}  the number of covariates and \emph{Samples} the number of samples in each dataset.}
\label{table:datasets}
\end{table}

\begin{table}[ht]
\center
  \scalebox{0.8}{\begin{tabular}{rlrrr}
\toprule
taskId & name & Classes & Features & Samples \\
\midrule
5 & arrhythmia & 12 & 279 & 360 \\
7 & audiology & 23 & 69 & 180 \\
41 & soybean & 19 & 35 & 545 \\
3022 & vowel & 11 & 12 & 792 \\
3481 & isolet & 26 & 617 & 6237 \\
3567 & collins & 15 & 21 & 400 \\
3952 & chess & 18 & 6 & 22444 \\
9956 & one-hundred-plants-texture & 100 & 64 & 1279 \\
125922 & texture & 11 & 40 & 4400 \\
146032 & primary-tumor & 20 & 17 & 271 \\
\bottomrule
\end{tabular}
}
\caption{List of datasets  used for evaluating \methodname{} on unseen number of classes. Datasets are extracted from theTabZilla benchmark \citep{mcelfresh2023neural} and have a number of classes greater than $10$ (ranging from $11$ to $100$).  Here \emph{taskId} is the OpenML ID of the task, \emph{Classes} indicates the number of classes, \emph{Features}  the number of covariates and \emph{Samples} the number of samples in each dataset.  }
\label{table:ood_datasets_cc18}
\end{table}

\section{Additional experiment results}\label{sec:additional_results}

\subsection{ Ablation on the different components of EquiTabPFN.}\label{sec:ablation}
\cref{tab:ablation} reports relative error reduction over TabPFN in two settings: 
(1) {\bf TabPFN bb + Eq dec.} using the TabPFN backbone (without bi-attention) and our equivariant decoder, and 
(2) {\bf Bi-attn bb + MLP dec.} using a bi-attention backbone and a standard MLP decoder from TabPFNv1. Since the encoder is a simple linear embedding, it is considered part of the backbone: fully connected for TabPFN-style models and 1x1 convolution for biattention-based ones. The combination of both—biattention and the equivariant decoder, \textit{e.g}., the EquiTabPFN model we propose—yields the largest performance gain. We also tried using TabPFNv2 architecture backbone and modified it to make it target equivariant, then trained it using the publicly available code for the prior used for training TabPFNv1. This led to improvements compared to \ourtabpfn{} (the version we retrained ourselves using publicly available training prior). However, we found that using TabPFNv1’s backbone yielded the best performance overall. 
\begin{table}[h!]
\centering
\begin{tabular}{lccc}
\toprule
\textbf{} & \textbf{EquiTabPFN} & \textbf{TabPFN bb + Eq dec.} & \textbf{Bi-attn bb + MLP dec.} \\
\midrule
\textbf{\% Error reduction } & +1.50\% & +0.94\% & -0.12\% \\
\bottomrule
\end{tabular}
\caption{Error reduction over TabPFN for different model configurations. \todo{discribe the data on which these numbers were obtained.}}
\label{tab:ablation}
\end{table}

\subsection{Computational cost comparisons}\label{sec:computational_cost}

{\bf Run time comparison.} \cref{tab:run_time_comparison} shows the run time comparison between EquiTabPFN, TabPFNv1 and TabPFNv2 on both types of datasets (small or large number of classes).  
On a small number of classes, EquiTabPFN incurs a slowdown of 5x compared to TabPFNv1. This is expected as TabPFNv1 was optimized to handle data with less than 10 classes. 
On datasets with more than 10 classes, the gap narrows (only 1.3x slowdown) as TabPFNv1 requires ensembling techniques to handle the larger number of classes. 
A more complete picture accounts for the tradeoff between time cost and performance as in \cref{fig:bubble_plot} (left) and shows a clear advantage of EquiTabPFN in terms of efficiency. 
\begin{table}[h!]
\centering
\begin{tabular}{lccc}
\toprule
\textbf{} & \textbf{EquiTabPFN} & \textbf{TabPFNv1} & \textbf{TabPFNv2} \\
\midrule
\textbf{< 10 Classes} & 0.12 & 0.02 & 0.16 \\
\textbf{> 10 Classes} & 0.40 & 0.30 & 2.80 \\
\bottomrule
\end{tabular}
\caption{Time comparison (in seconds) across two types of datasets, depending on their class count.}
\label{tab:run_time_comparison}
\end{table}

{\bf FLOPS comparaison.} 
While EquiTabPFN and TabPFN have similar parameter counts, EquiTabPFN uses more FLOPS. On an A100 GPU with 2,000 samples (100 features, 10 classes), EquiTabPFN required 566 GFLOPS vs. 76 GFLOPS for TabPFN ($\sim$7.45× more). However, with more classes (15), the gap narrows due to the ensembling needed for TabPFN to handle more than 10 classes (EquiTabPFN: 820 GFLOPS; TabPFN: 456 GFLOPS), consistent with runtime trends. Overall, those numbers remain small for a modern GPU given that a single H100 can easily reach 400 TFLOPS on an LLM training workflow for instance. 

{\bf Memory cost.} EquiTabPFN incurs an increase compared to TabPFN which was translated in the use of smaller batch size (first dimension of the activation tensors).  However, the context (in terms of the number of samples that can be processed by the model on our devices) was not affected in the experiments. This is likely due to the moderate size of the contexts used whose ranges are within the recommended limits for TabPFN.

\subsection{Binary classification decision boundary}

In \cref{fig:binary_boundary}, we show the decision boundary on 3 binary classification datasets for multiple baselines. To illustrate the stability of the method, we do not do ensembling for TabPFN and \methodname{}.

\begin{figure}
\center
\includegraphics[width=0.98\textwidth]{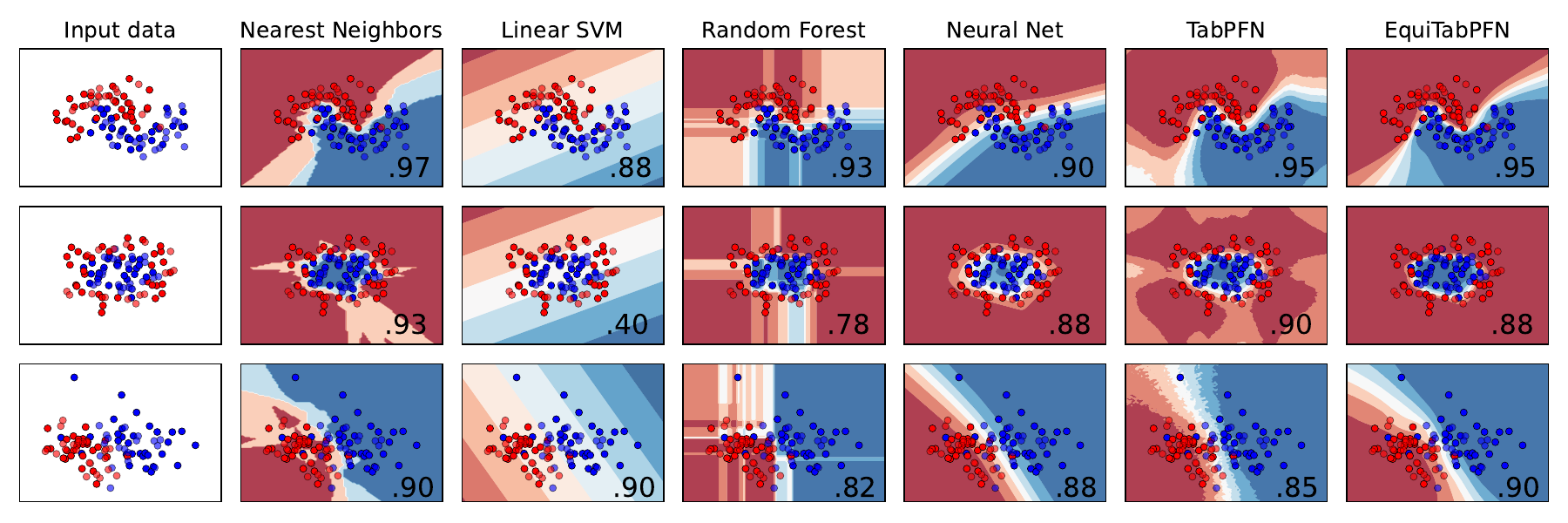}
\caption{Binary classification decision boundary for 7 methods on 3 datasets.
Even without ensembling, the boundary of \methodname{} is stable and smooth as opposed to TabPFN \label{fig:binary_boundary}.}
\end{figure}

\subsection{Critical difference diagrams and performance metric tables}

We show the critical diagram using Autorank implementation \cite{Herbold2020} in \cref{fig:critdiag_ood} for datasets with more than 10 classes and \cref{fig:critdiag_regular} for datasets with less than 10 classes. 
Critical diagrams show the average rank of each method (lower is better) and use a horizontal bar to show the methods statistically tied. We set the confidence level to $0.05$ and use default hyper-parameters while forcing non-parametric mode to ensure stability.

We also give aggregate results for datasets with more than 10 classes in \cref{table:ood_datasets} and less than 10 classes in \cref{table:regular_datasets}.

\begin{figure}[ht]
\center
\includegraphics[width=0.8\textwidth]{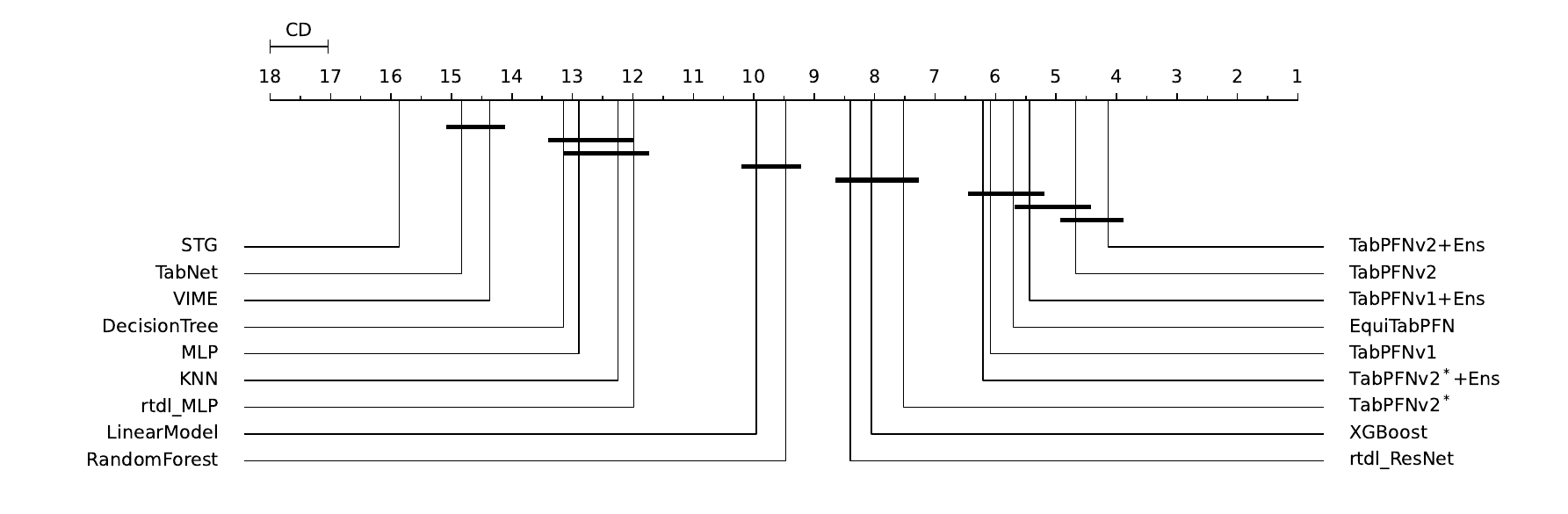}
\caption{Critical diagram on the 76 real-world datasets with less than $10$ classes from \cref{table:datasets}.  \label{fig:critdiag_regular}}
\end{figure}
\begin{figure}[ht]
\center
\includegraphics[width=0.8\textwidth]{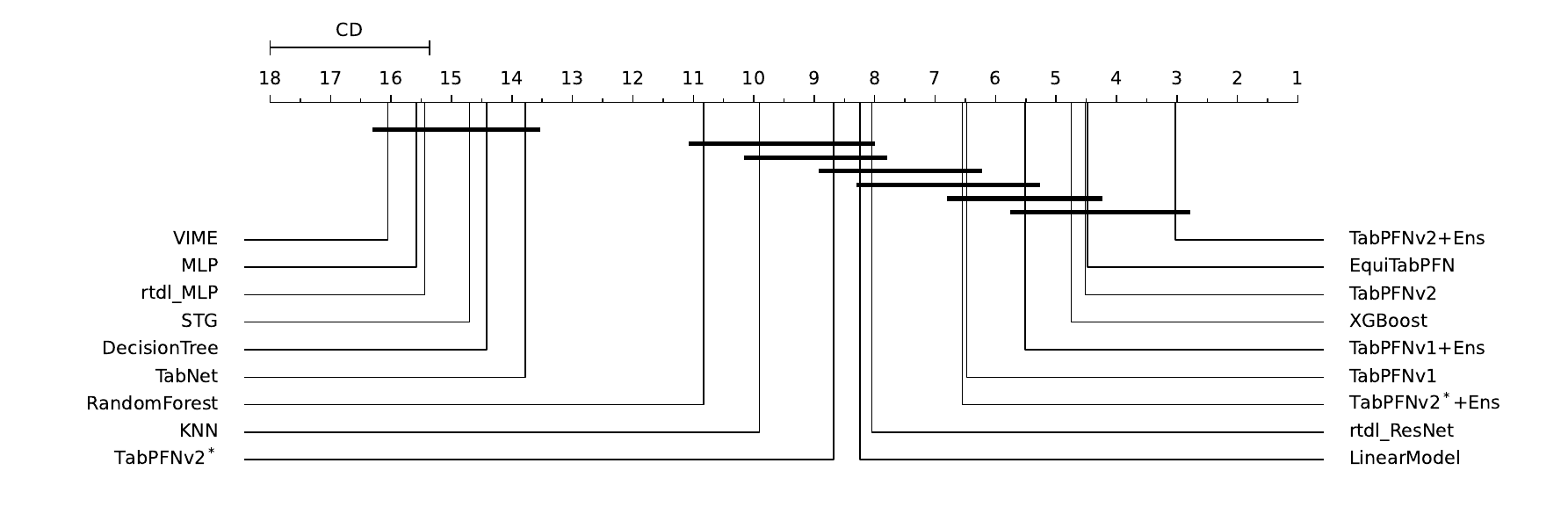}
\caption{Critical diagram on the 10 real-world datasets with more than $10$ classes from \cref{table:ood_datasets_cc18}.  \label{fig:critdiag_ood}}
\end{figure}

\begin{table}[ht]
\center
   \scalebox{0.7}{\begin{tabular}{llllll}
\toprule
 & Median relative Acc. & Mean Acc. & Mean AUC & Mean F1 & Mean time (s) \\
model &  &  &  &  &  \\
\midrule
EquiTabPFN & 27.9 & 77.0 +/- 1.8 & 95.2 +/- 0.7 & 75.0 +/- 2.0 & 0.4 \\
XGBoost & 24.8 & 80.7 +/- 1.6 & 95.3 +/- 0.7 & 79.1 +/- 1.8 & 12.6 \\
rtdl(ResNet) & 21.2 & 80.9 +/- 1.7 & 91.1 +/- 1.1 & 77.9 +/- 1.9 & 20.5 \\
TabPFNv1 & 18.3 & 73.9 +/- 2.0 & 94.4 +/- 0.8 & 71.2 +/- 2.1 & 0.3 \\
LinearModel & 16.8 & 65.8 +/- 2.5 & 92.6 +/- 0.8 & 63.1 +/- 2.7 & 0.3 \\
RandomForest & 9.4 & 63.3 +/- 1.5 & 91.6 +/- 0.7 & 58.4 +/- 1.6 & 0.8 \\
TabPFNv2 & 6.0 & 74.5 +/- 2.6 & 94.4 +/- 0.9 & 73.6 +/- 2.6 & 2.8 \\
TabPFNv2$^*$ & 3.9 & 67.3 +/- 2.5 & 92.7 +/- 0.9 & 64.6 +/- 2.7 & 2.8 \\
KNN & 0.0 & 62.9 +/- 1.9 & 90.3 +/- 1.0 & 58.4 +/- 2.1 & 1.6 \\
DecisionTree & -4.2 & 51.8 +/- 1.8 & 80.7 +/- 1.0 & 47.4 +/- 1.7 & 0.7 \\
TabNet & -12.5 & 51.2 +/- 2.8 & 77.5 +/- 1.7 & 48.7 +/- 2.9 & 43.4 \\
STG & -19.3 & 46.1 +/- 1.9 & 78.7 +/- 1.1 & 39.5 +/- 1.9 & 35.2 \\
rtdl(MLP) & -20.7 & 53.9 +/- 2.5 & 74.1 +/- 1.8 & 44.7 +/- 2.9 & 13.9 \\
MLP & -22.6 & 51.0 +/- 2.1 & 73.4 +/- 1.6 & 41.6 +/- 2.3 & 15.5 \\
VIME & -27.4 & 49.2 +/- 2.6 & 70.5 +/- 1.9 & 41.3 +/- 3.0 & 39.2 \\
\bottomrule
\end{tabular}
}
\caption{\small{Aggregate accuracy, AUC, F1 and runtime for all methods on datasets with more than 10 classes. Results are ordered by the median relative accuracy improvement w.r.t KNN over the 10 datasets after averaging over the 10 different splits. Mean accuracies, AUC and F1 score are averaged over all splits and datasets. Numbers after the symbol $+/-$ refer to the standard error of the mean over all splits and datasets.}}
\label{table:ood_datasets}
\end{table}
\begin{table}[ht]
\center
  \scalebox{0.7}{\begin{tabular}{llllll}
\toprule
 & Median relative Acc. & Mean Acc. & Mean AUC & Mean F1 & Mean time (s) \\
model &  &  &  &  &  \\
\midrule
TabPFNv2 & 9.1 & 85.7 +/- 0.5 & 90.3 +/- 0.5 & 85.5 +/- 0.6 & 0.2 \\
EquiTabPFN & 6.3 & 83.7 +/- 0.6 & 88.9 +/- 0.5 & 83.3 +/- 0.6 & 0.1 \\
TabPFNv1 & 5.8 & 83.4 +/- 0.6 & 88.9 +/- 0.6 & 83.0 +/- 0.6 & 0.0 \\
XGBoost & 5.2 & 82.7 +/- 0.6 & 88.1 +/- 0.5 & 82.5 +/- 0.6 & 0.4 \\
RandomForest & 4.6 & 79.7 +/- 0.5 & 86.7 +/- 0.5 & 78.9 +/- 0.6 & 0.2 \\
TabPFNv2$^*$ & 4.3 & 80.8 +/- 0.6 & 87.1 +/- 0.6 & 80.1 +/- 0.6 & 0.2 \\
rtdl(ResNet) & 3.6 & 80.2 +/- 0.6 & 85.7 +/- 0.5 & 79.4 +/- 0.7 & 6.3 \\
LinearModel & 1.5 & 77.1 +/- 0.7 & 82.4 +/- 0.6 & 76.2 +/- 0.7 & 0.0 \\
DecisionTree & 0.6 & 76.2 +/- 0.6 & 79.8 +/- 0.5 & 75.1 +/- 0.6 & 0.0 \\
MLP & 0.3 & 73.2 +/- 0.7 & 76.7 +/- 0.7 & 71.0 +/- 0.8 & 6.7 \\
rtdl(MLP) & 0.3 & 73.4 +/- 0.8 & 77.3 +/- 0.7 & 71.1 +/- 0.9 & 4.8 \\
KNN & -0.0 & 73.8 +/- 0.6 & 79.5 +/- 0.6 & 73.0 +/- 0.7 & 0.0 \\
STG & -5.0 & 66.5 +/- 0.7 & 70.6 +/- 0.5 & 63.7 +/- 0.8 & 11.0 \\
TabNet & -6.4 & 68.5 +/- 0.6 & 73.5 +/- 0.5 & 67.5 +/- 0.7 & 17.2 \\
VIME & -8.1 & 63.5 +/- 0.7 & 69.7 +/- 0.7 & 60.8 +/- 0.8 & 10.5 \\
\bottomrule
\end{tabular}
}
\caption{\small{Aggregate accuracy, AUC, F1 and runtime for all methods on datasets with less than 10 classes. Results are ordered by the median relative accuracy improvement w.r.t KNN over the 10 datasets after averaging over the 10 different splits. Mean accuracies, AUC and F1 score are averaged over all splits and datasets. Numbers after the symbol $+/-$ refer to the standard error of the mean over all splits and datasets.}}
\label{table:regular_datasets}
\end{table}

\clearpage

\end{document}